\newtheorem{theorem}{Theorem}
\newtheorem{lemma}[theorem]{Lemma}
\newtheorem{corollary}[theorem]{Corollary}
\newtheorem{prop}{Proposition}
\begin{document}

%\title{Nature of Very High Dimensional Spaces and Deep Neural Networks}
\title{High Dimensional Spaces, Deep Learning and Adversarial Examples}
\author{Simant Dube}
\date{January 1, 2018 (version 1)}
\thanks{Email: simantdube@iitdalumni.com}
\address{{\rm Simant Dube is a computer scientist working in San Francisco area and is a former academic.}}
\email{simantdube@iitdalumni.com}

\begin{abstract}
In this paper, we analyze deep learning from a mathematical point of view and derive several novel results. The results are based on intriguing mathematical properties of high dimensional spaces. We first look at perturbation based adversarial examples and show how they can be understood using topological and geometrical arguments in high dimensions. We point out mistake in an argument presented in prior published literature, and we present a more rigorous, general and correct mathematical result to explain adversarial examples in terms of topology of image manifolds. Second, we look at optimization landscapes of deep neural networks and examine the number of saddle points relative to that of local minima. Third, we show how multiresolution nature of images explains perturbation based adversarial examples in form of a stronger result. Our results state that expectation of $L_2$-norm of adversarial perturbations is $O\left(\frac{1}{\sqrt{n}}\right)$ and therefore shrinks to 0 as image resolution $n$ becomes arbitrarily large. Finally, by incorporating the parts-whole manifold learning hypothesis for natural images, we investigate the working of deep neural networks and root causes of adversarial examples and discuss how future improvements can be made and how adversarial examples can be eliminated.
\end{abstract}

\maketitle

\section{Introduction}
In last decade, there has been proliferation of applications of deep neural networks in the general field of Artificial Intelligence (AI) and specifically in computer vision, speech recognition and natural language understanding. In AI community there has been debate centered around need for greater mathematical rigor and understanding of deep learning. Though there has been significant progress in practical techniques based on impressive trial-and-error empirical work, theory has been lagging behind practice. Can there be simple mathematical results which shed light on how deep learning works? How can one understand shortcomings of present day deep learning which can pave way for future work?

In machine learning theory, there are several very well-known fundamental theoretical results. In this paper, we present several novel results on adversarial examples, optimization landscape, local minima and image manifolds, that provide mathematical rigor to the specific field of deep neural networks which operate in very high dimensions. We apply results from very high dimensional mathematical spaces to deep learning which is the primary goal of the paper.

There has been already significant research in unraveling how and why deep learning works. In~\cite{loss_surface_choromanska}, under certain assumptions and using results from random matrix theory applied to spin-glasses, authors evaluate loss surfaces of multi-layer feed forward neural networks. Their results indicate that there is a layered structure of critical points. Near global minimum, most of critical points are local minima. In higher bands, we start seeing saddle points of increasing index and the probability of finding local minima decreases exponentially. In~\cite{yann_dauphin}, based on evidences from several directions, such as statistical physics, random matrix theory and experimental work, strong thesis is presented which states that deep networks don't suffer from local minima problem and instead suffer from saddle points which can give illusory appearance of local minima. In~\cite{loss_surface_nguyen}, under certain assumptions which includes having very large and wide neural networks, it is shown that local minima are almost always close to the global minimum. In~\cite{levent_sagun}, empirical work to look at distribution of eigen-values of the Hessian matrix of the loss function is presented on simple examples and it is observed that the Hessian is very singular for these examples. In~\cite{poggio_2, poggio_3}, singularity of the Hessian is shown for underdetermined overparameterized systems.

Lot of interesting work has been done in the area of adversarial examples, see~\cite{explaining_goodfellow,kurakin_physical_world,fool_nguyen,intriguing_szegedy}. See survey in~\cite{adversarial_survey}. There are two kinds of adversarial examples. First kind is perturbation based in which an imperceptible perturbation is added to an image to change the output of the deep network, see~\cite{intriguing_szegedy}. In the second kind, images which are unrecognizable by humans are classified by deep networks with high probability, see~\cite{fool_nguyen}. See Figure~\ref{fig_adv_examples} for adversarial examples. Adversarial examples tell us something fundamental about the way present day deep networks work. The hypothesis presented in~\cite{jo_bengio} which states that CNNs are learning superficial cues rather than high-level semantic abstractions matches with the conclusions in this paper. Adversarial examples are of great practical significance too as they can pose risks to real world applications of deep learning, see~\cite{kurakin_physical_world}. For theoretical results on adversarial examples see~\cite{hein_andriushchenko}, where instance-specific lower bounds on the norm of the input manipulation required to change the classifier decision are given using theorems from calculus, under the assumption that the classifier is continuously differentiable. In this paper, the approach is quite different and we give geometrical proofs for arbitrary manifold geometries which show direct relationship of the norm to the input dimension using properties of high-dimensional spaces in the most general case without any differentiability assumptions.

In this paper, we point out mistake in an argument presented in a published paper in 2015 by Goodfellow et al., see reference~\cite{explaining_goodfellow}, where authors present argument to explain adversarial examples for linear model $w^T x$.  Problem in reasoning behind their argument has been independently highlighted earlier in~\cite{tanay_griffin}. The authors in~\cite{explaining_goodfellow} argue that as feature dimensionality $n$ of the linear model increases, one can increase total perturbation amount larger and larger, while keeping $L_\infty$-norm constant and imperceptibly small, till it reaches a desired target activation value to flip the classification result of the linear model. The problem in this argument is the assumption that desired target value remains constant and is independent of $n$. But as dimensionality increases, we are working in different spaces and the norms of all vectors change, so it becomes a moving target. The norms of weight vector $w$ and $x$ increase too. Therefore increasing $n$ does not help in finding adversarial examples for most samples as they are far away from the decision boundary. In order to generate adversarial examples, we will be forced to relax $L_\infty$-norm constraint to a higher value which will depend on the distance of sample from the decision boundary and these distances either will have no upper bound or could be very large (depending on feature value range) in general and therefore perturbation will be mostly not small. Consequently, the generalization of the argument in~\cite{explaining_goodfellow} to deep networks is not valid. In Theorem~\ref{linearity_theorem}, we state that the linear model does not suffer from adversarial examples.

In this paper, we present general, rigorous and correct mathematical results which work on any bounded image manifold. They work on manifolds carved out by deep networks through piecewise linear approximation as special case, provided they are bounded manifolds.

We present our results restricting ourselves to image classification problem in computer vision. Overview of our paper is as follows. Our first result in Section~\ref{section_adversarial_negatives} explains why one can generate perturbation based adversarial examples in deep learning. Then in Section~\ref{section_sgd} we look at the question why deep learning often works well in practice without getting stuck in local minima when using stochastic gradient descent (SGD). To understand that we make use of results about bounds on number of critical points of normal random polynomials. In Section~\ref{section_statistics_natural} we use the Manifold Learning hypothesis and statistics of natural images to understand the nature of high-dimensional image manifolds which allows us to make our mathematical results stronger. This also provides insights into the second kind of adversarial examples which are random looking noisy images giving high confidence outputs. In Section~\ref{section_future}, we present results to solve the problem of adversarial examples. We first investigate complexity of surfaces of image manifolds and apply empirical results from~\cite{lid_paper_deep} along with theoretical results in this paper to understand the root causes of adversarial examples. Based on recent work in~\cite{capsule_networks} on capsule networks, we consider Parts-Whole Manifold Learning Hypothesis to understand limitations of the present day deep learning, which shows the way for elimination of adversarial examples and for future improvements in deep learning and in the general field of AI.

\begin{figure}[t]
    \centering
		\includegraphics[width=\textwidth]{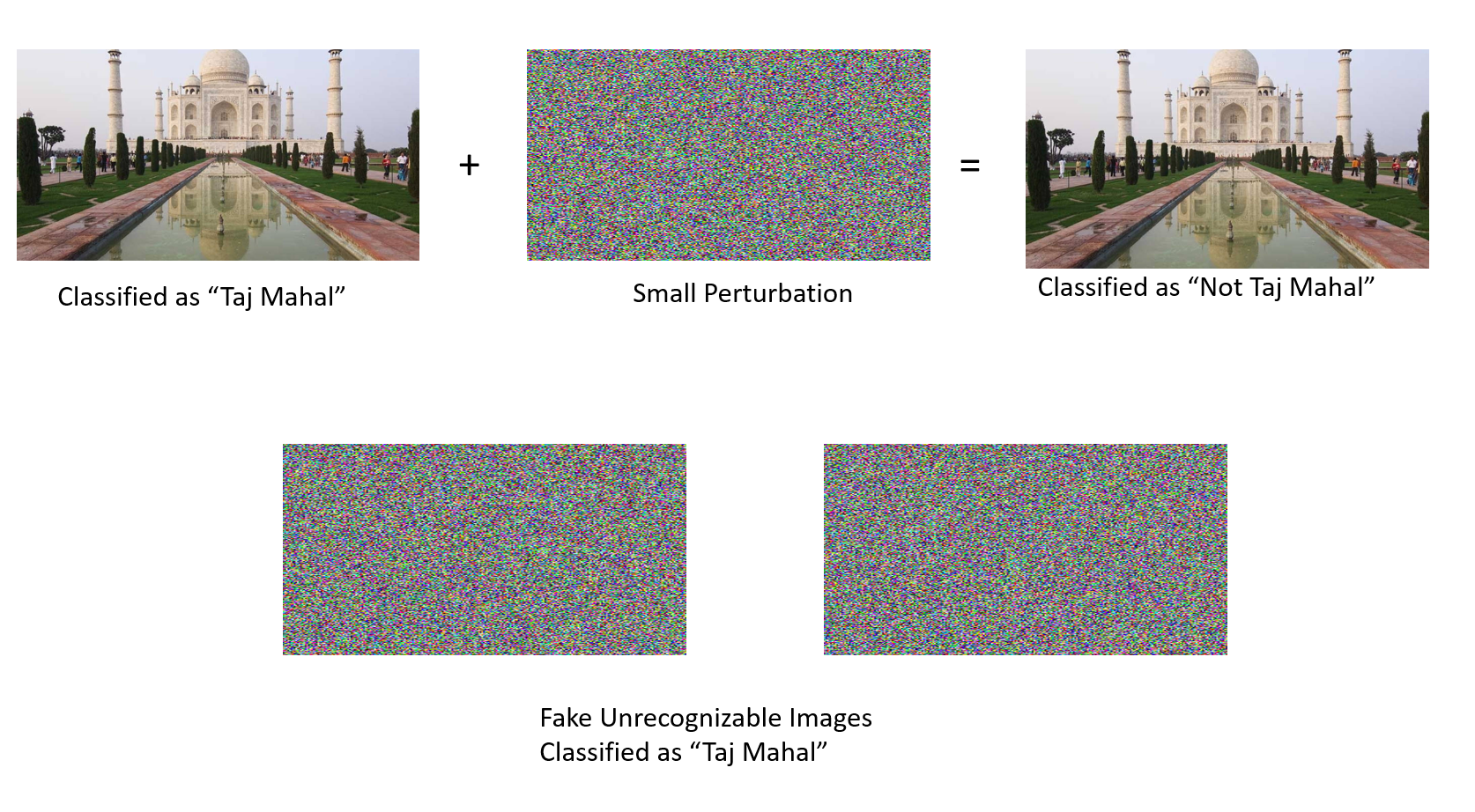}
    \caption{Deep learning suffers from adversarial examples. {\em Top}, a small perturbation image can be added to any image which makes the network change its result, see~\cite{intriguing_szegedy}. {\em Bottom}, fake unrecognizable images can be created which give high confidence classification outputs, see~\cite{fool_nguyen}.}
    \label{fig_adv_examples}
\end{figure}

\section{Adversarial Examples}
\label{section_adversarial_negatives}
It has been shown that deep learning suffers from the problem of adversarial examples, see Figure~\ref{fig_adv_examples}. One can slightly perturb a sample in such a way that the output of deep network changes. In case of images, the perturbed image is visually indistinguishable from the original.

First let us define the notion of an image and that of a deep learning system.

{\bf Definition:} An {\em infinite resolution\/} gray-scale image $I$ is a function on 2-D unit square, $f:[0,1]^2 \rightarrow \mathbb{R}$. A {\em finite resolution\/} version of the image $I$ is obtained by approximating $f$ in a grid of $n = r \times c$ pixels. An {\em image class\/} is a set of infinite resolution images which belong to a semantic category.

In practice, we will have finite resolution versions of the image class where particular resolution is constrained by memory and computing power. As technology progresses, images will have increasing resolutions. This has significant impact on mathematical analysis of deep learning as will be shown in this paper.

{\bf Definition:}
A {\em deep learning image classification system\/} is a 4-tuple $D = (A,T,S,Q)$ where:
\begin{itemize}
\item
$A$ is the deep neural network.
\item
$T$ is the training data for $k \geq 1$ {\em positive classes\/} $C = \{C_1, C_2, \ldots, C_k\}$ and the {\em negative class\/} $C_{k+1}$. $T$ consists of images at some finite resolution $n = r \times c$. The negative class contains sample images belonging to none of the $k$ positive classes.
\item
$S = \{G_i \subset \mathbb{R}^n| i \in \{1,2,\ldots,k\}\}$ is the set of {\em ground truth compact manifolds\/} for $k$ classes, where for all $i$, $G_i$ contains all images at resolution $n$ belonging to class $C_i$.
\item
$Q = \{P_i \subset \mathbb{R}^n| i \in \{1,2,\ldots,k\}\}$ is the set of {\em trained compact manifolds\/} for $k$ classes. Once $A$ has been trained using $T$, for all $i$, $P_i= \{ x \in \mathbb{R}^n | A(x) = C_i\}$ is the trained manifold for class $C_i$ and the goal of the training is to make it approximate $G_i$ as closely as is practically possible.
\end{itemize}

The type of set is chosen to be a manifold rather than an arbitrary set to emphasize that we are dealing with semantically meaningful natural image classes. The idea of image manifolds has been found to be useful in computer vision, see~\cite{srivastava_manifold, manifold_book}. Given a natural image, assumption that there is a locally Euclidean neighborhood around it in the set is a reasonable one~\cite{manifold_book}. Bounded dynamic range of image class implies that manifold is bounded. The greyness value of any image pixel can not shoot off to infinity. And if there is a convergent sequence of images, then the limiting image should be included in the set too, making the manifold compact.  Though even if the mathematical results can hold for arbitrary sets, we will see in later sections that restricting to manifolds is conceptually helpful in understanding natural images.

Let $x, y \in \mathbb{R}^n$ be a randomly selected image and its adversarial example, respectively. Let $y = x + p$ where $p \in \mathbb{R}^n$ is the perturbation image. We prove our first result which shows why it becomes easier to create a visually distinguishable adversarial example as resolution of images increases. The expected value of $L_2$-norm of the perturbations becomes very small for high-dimensional images.

It should be noted that the following results are applicable to any machine learning model provided conditions for the theorems are met. In classical machine learning too, which approximate ground truth manifolds by trained manifolds, as the dimensionality of input features will increase, so will the problem of adversarial examples. Since deep learning takes raw data as the input which has high dimensions, it is particular relevant to deep neural networks.

We have defined adversarial example in terms of any $x$ which gets positively classified. One could have focused only on correctly and positively classified samples. Since underlying practical assumption is to work with deep networks which yield high accuracy, we restrict to the mathematically simpler case of positively classified samples. A positively classified sample is very likely to be correctly classified in such high accuracy networks. An {\em adversarial example\/} is one which makes the output of the deep network {\em change\/} with visually insignificant perturbation.

Even if the trained manifolds are identical to the ground truth manifolds, note that there will be always images at the surface of these manifolds for which the output of the deep network changes on slight perturbation and they will always have adversarial examples by this definition. But we expect this only to be true only for borderline images and not for almost every image, see Figure~\ref{fig_two_manifolds}. What will be truly intriguing if we can mathematically prove that almost every image happens to have this property of having an adversarial example. We will show that this indeed follows from intriguing properties of high-dimensional spaces.

\begin{figure}[t]
    \centering
		\includegraphics[width=\textwidth]{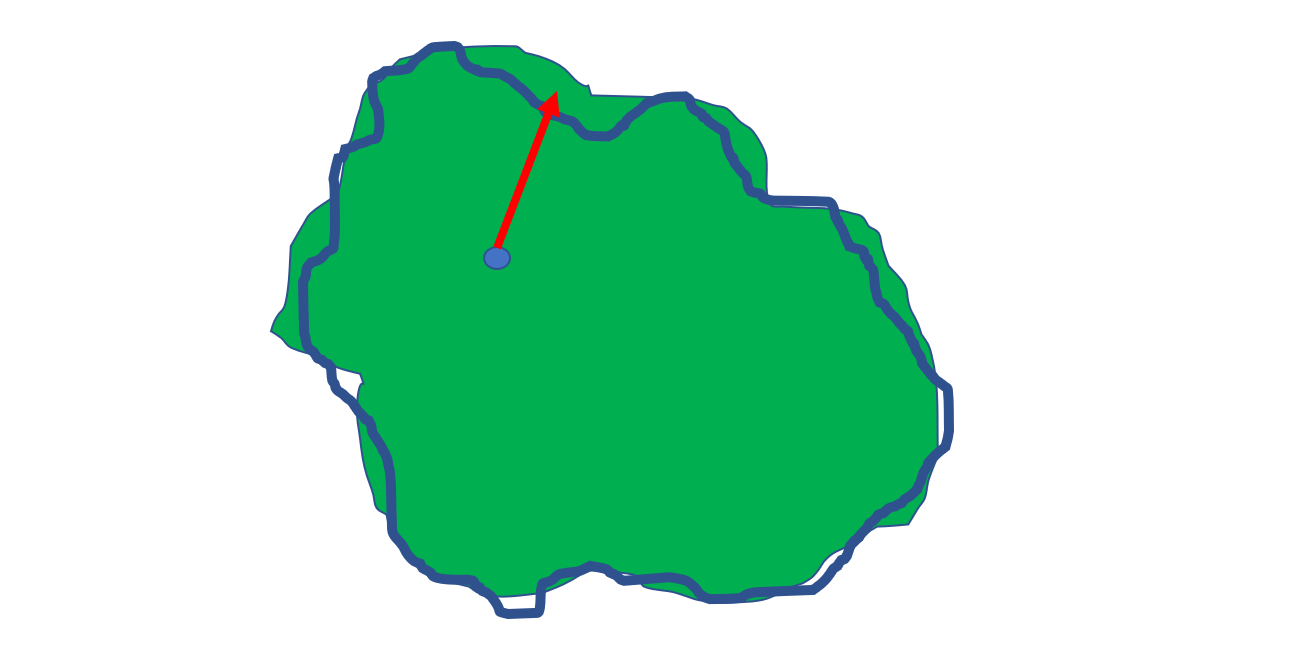}
    \caption{Consider the trained manifold as marked by the blue boundary, and define adversarial image as one which makes the decision of neural network change. The perturbation is shown by red arrow. We should expect this to be true only for borderline images for visually imperceptible perturbations. The ground truth manifold is shown in green. We will examine properties of the ground truth manifold later in the paper.}
    \label{fig_two_manifolds}
\end{figure}

\subsection{Image Manifolds which are $n$-balls}
Suppose one can perturb images very slightly to get adversarial examples and we were to bound this perturbation amount. An interesting question to ask is that given an image how probable it is to be able to perturb it within this bound and successfully get an adversarial example. For that, we first need the following Lemma.

\begin{lemma}\label{lemma_sphere}
Consider any image classification problem at any finite resolution $n = r \times c$. For any random sample $x \in \mathbb{R}^n$ which is classified positively by the deep network $A$ for one of the classes, let $y \in \mathbb{R}^n$ be the closest adversarial example where $A(x) \neq A(y)$ and let $y = x + p$ where $p \in \mathbb{R}^n$ is the perturbation image. Denote the perturbation $L_2$ norm $\left\Vert p \right\Vert$ for $x$ by $P(x)$.

Further, assume that trained manifold $B \subset \mathbb{R}^n$ for the image class which $x$ belongs to, is topologically an $n$-ball of radius $R$
\[B = \{ x \in \mathbb{R}^n | \left\Vert x \right\Vert \leq R\}\]

Let $\alpha$, where $0 < \alpha \leq 1$, be the relative perturbation bound with respect to the radius $R$.

Then,
\begin{enumerate}
\item[1.]
\[\mathop{\rm{Prob}}_{x \in B} \left\{ \left\Vert P(x) \right\Vert \leq \alpha R \right\} =  1 - (1-\alpha)^n\]
\item[2.]
\[\mathop{\mathbb{E}}_{x \in B} \left( \left\Vert P(x) \right\Vert \right) = \frac{R}{n+1}\]
\end{enumerate}
\end{lemma}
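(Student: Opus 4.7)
The plan is to reduce both assertions to elementary one-dimensional computations in the radial variable $\|x\|$, exploiting the rotational symmetry of $B$. The key geometric observation is that since $B$ is the $n$-ball $\{y : \|y\| \leq R\}$ and $x$ is an interior point, the closest adversarial example — that is, the closest point of $\mathbb{R}^n \setminus B$ — is the radial projection of $x$ onto the bounding sphere. Indeed, for any $y$ with $\|y\| \geq R$, the reverse triangle inequality gives $\|y-x\| \geq \|y\| - \|x\| \geq R - \|x\|$, with equality when $y = (R/\|x\|)\,x$. Therefore $P(x) = R - \|x\|$, and everything reduces to the distribution of $\|x\|$ under the natural uniform measure on $B$ implicit in the notation $\mathrm{Prob}_{x \in B}$.

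Next I would record the standard volume-scaling fact that, under uniform sampling on $B$, $\mathrm{Prob}(\|x\| \leq r) = (r/R)^n$, since a ball of radius $r$ has volume $(r/R)^n$ times that of $B$. Part 1 is then immediate: $\|P(x)\| \leq \alpha R$ is equivalent to $\|x\| \geq (1-\alpha)R$, whose probability equals $1 - (1-\alpha)^n$. For part 2, I would differentiate the CDF to get the density $n r^{n-1}/R^n$ on $[0,R]$, integrate to obtain $\mathbb{E}(\|x\|) = nR/(n+1)$, and subtract from $R$ to conclude $\mathbb{E}(\|P(x)\|) = R/(n+1)$.

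There is no real obstacle here; the entire argument hinges on two ingredients, namely the geometric identification $P(x) = R - \|x\|$ and the volume-ratio formula for concentric balls. The only point that deserves care is making explicit that the probability measure on $B$ is the uniform (normalized Lebesgue) measure, since the lemma does not spell this out. Once that is granted, both statements follow from a two-line volume calculation and a single elementary integral.
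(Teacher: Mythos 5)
Your proposal is correct and follows essentially the same route as the paper: identify $P(x) = R - \|x\|$ and reduce to the radial distribution under uniform sampling on $B$. The paper writes the computation as a ratio of integrals of the surface-area density $S(n,r) \propto r^{n-1}$, while you phrase it via the CDF $(r/R)^n$ and its derivative; these are the same calculation. The one small improvement in your write-up is that you actually justify $P(x) = R - \|x\|$ via the reverse triangle inequality, whereas the paper simply asserts that the nearest exterior point lies radially outward; you also flag explicitly that the uniform measure on $B$ is being assumed, which the paper leaves implicit.
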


\begin{proof}
Let the surface area of the $n$-ball of radius $r$ be $S(n,r)$. The closest adversarial example $y$ for $x$ will be just outside surface of $B$ and its distance from $x$ will be just above the shortest distance of $x$ from the surface of the ball. 

For any given $w$ where $0 < w \leq R$, points in the ball will have adversarial examples within distance $w$ if they are within the outermost spherical shell of width $w$. We calculate the ratio of the volume contained in outermost spherical shell to the total volume, 

\begin{equation*}
\begin{split}
\mathop{\rm{Prob}}_{x \in B} \left\{ \left\Vert P(x) \right\Vert  \leq \alpha R \right\} & = \frac{\int_{(1-\alpha)R}^{R} S(n,r) dr} {\int_{0}^{R} S(n,r) dr} \\
              & = \frac{S(n,1)\int_{(1-\alpha)R}^{R} r^{n-1} dr} {S(n,1) \int_{0}^{R} r^{n-1} dr} \\
							& = \frac{R^n - ((1-\alpha)R)^n} {R^n}\\
							& = 1 - \left( 1 - \alpha \right)^n
\end{split}
\end{equation*}

If $x$ is at distance $r$ from the center, then $R-r$ is its distance to $y$,

\begin{equation*}
\begin{split}
\mathop{\mathbb{E}}_{x \in B} \left( \left\Vert P(x) \right\Vert \right) & = \frac{\int_{0}^{R} \left(R - r\right) S(n,r) dr} {\int_{0}^{R} S(n,r) dr} \\
              & = \frac{S(n,1)\int_{0}^{R} \left( R - r \right) r^{n-1} dr} {S(n,1) \int_{0}^{R} r^{n-1} dr} \\
							& = \frac{\frac{R^{n+1}}{n} - \frac{R^{n+1}}{n+1}} {\frac{R^n}{n}}\\
							& = \frac{R}{n+1}
\end{split}
\end{equation*}

\end{proof}
We have straightforward corollary.
\begin{corollary}
For any $\beta > 0$,
\[\mathop{\rm{Prob}}_{x \in B} \left\{ \left\Vert P(x) \right\Vert \leq \beta \right\} = 1 - \left(1 - \frac{\beta}{R} \right)^n\]
Furthermore,
\[\mathop{\mathbb{E}}_{x \in B} \left( \frac{\left\Vert P(x) \right\Vert}{R} \right) = \frac{1}{n+1}\]
\end{corollary}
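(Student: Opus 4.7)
The plan is to obtain both statements by a direct change of variables in Lemma~\ref{lemma_sphere}, since the corollary just re-expresses the same facts with an absolute perturbation bound $\beta$ in place of the relative bound $\alpha R$, and with the expectation normalized by $R$.

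For the probability statement, I would split on the range of $\beta$. When $0 < \beta \leq R$, I would set $\alpha = \beta/R$, observe that this lies in the admissible range $(0,1]$ required by the lemma, and substitute into part (1) of Lemma~\ref{lemma_sphere} to get $1 - (1 - \beta/R)^n$ immediately. For $\beta > R$, I would argue separately that $P(x) \leq R$ deterministically for every $x \in B$ (since the closest boundary point of the $n$-ball is at distance at most $R$ from any interior point, achieved at the center), so the probability equals $1$; this matches the formula at the boundary value $\beta = R$, and one can either restrict the statement to $\beta \leq R$ or extend it by taking $\max(0,1-\beta/R)$, depending on the desired convention.

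For the expectation statement, I would simply divide both sides of part (2) of Lemma~\ref{lemma_sphere} by $R$ and invoke linearity of expectation to pull the constant factor inside:
\[
\mathop{\mathbb{E}}_{x \in B}\!\left(\frac{\|P(x)\|}{R}\right) \;=\; \frac{1}{R}\,\mathop{\mathbb{E}}_{x \in B}\!\bigl(\|P(x)\|\bigr) \;=\; \frac{1}{R}\cdot\frac{R}{n+1} \;=\; \frac{1}{n+1}.
\]

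There is essentially no obstacle here; the only mild subtlety is handling the case $\beta > R$ in the probability statement, since the substitution $\alpha = \beta/R$ then leaves the range allowed by the lemma. This is a cosmetic point that I would address with a one-line remark rather than a separate argument.
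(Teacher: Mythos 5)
Your proposal is correct and matches the paper's intent exactly: the paper states this as a ``straightforward corollary'' with no written proof, and the substitution $\alpha=\beta/R$ in part (1) together with dividing part (2) by $R$ is precisely what is meant. Your extra remark about the case $\beta>R$ is a reasonable small tidying-up that the paper leaves implicit.
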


\begin{figure}[t]
    \centering
		\includegraphics[width=\textwidth]{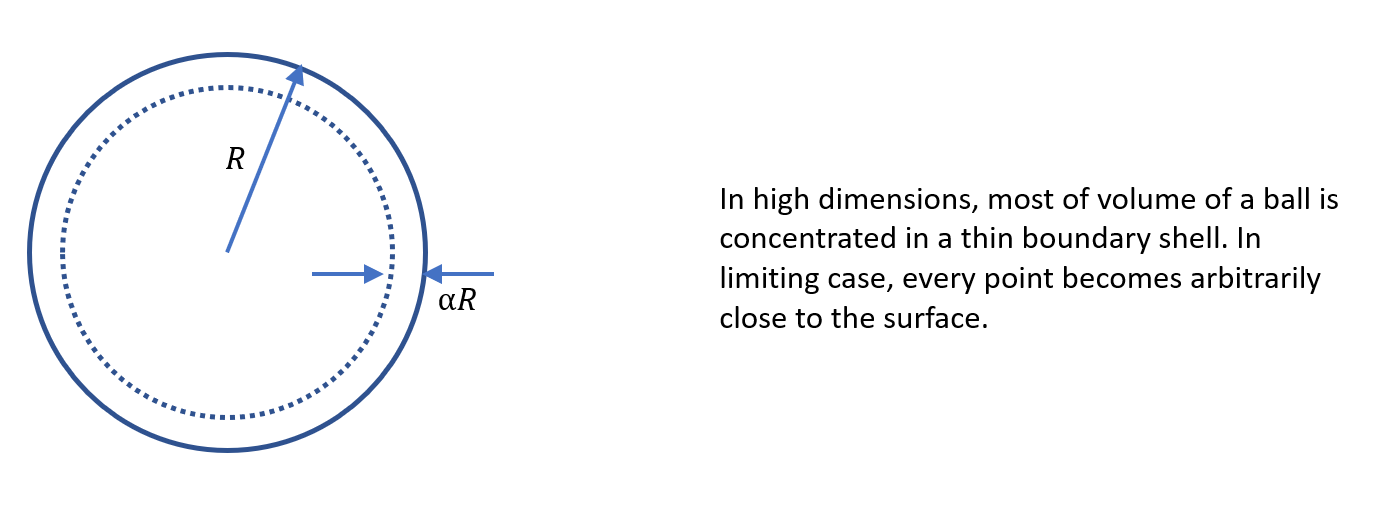}
    \caption{Proof of Lemma~\ref{lemma_sphere} depends on the fact that as $n$ increases, $n$-ball of any radius will have more fraction of volume closer to the surface. Therefore, probability that a point will be within a particular distance from the surface will increase for fixed radius. Since we are working with image manifolds, the radius of $n$-ball will also increase with $n$. Despite that, for 8-bit images $\in [0,255]^n$, the absolute distance to the surface shrinks to 0 as $n$ approaches $\infty$. Later in Section~\ref{section_statistics_natural} we will show a stronger result that for natural images $\in \mathbb{R}^n$ this holds true.}
    \label{fig_lemma_ball}
\end{figure}

See Figure~\ref{fig_lemma_ball}. Note that we can not conclude yet that expectation of $\left\Vert P(x) \right\Vert$ tends to 0 as $n$ approaches $\infty$ since we don't know how radius $R$ of image manifold will increase with $n$.  There are two cases.
\begin{enumerate}
\item
For 8-bit images, $B \subset [0,255]^n$ and is surrounded by a hypercube of diameter $255 {\sqrt n}$. Thus, $R=O(\sqrt{n})$, and $\mathbb{E}(\left\Vert P(x) \right\Vert)$ shrinks to 0.
\item
For idealized images with arbitrary real pixel values, $B \subset \mathbb{R}^n$. See Section~\ref{section_statistics_natural}  that $\mathbb{E}(\left\Vert P(x) \right\Vert)$ does approach 0 for natural image manifolds.
\end{enumerate}

\subsection{Image Manifolds with Arbitrary Geometries}
Now we present the main result on adversarial examples. We first define concept of radius of a manifold.

{\bf Definition:}
Let $M \subset \mathbb{R}^n$ be an $n$-dimensional bounded manifold with finite volume. Then, its {\em radius\/} is
\[{\rm Radius}(M) = R(M) = \frac{1}{2} \mathop{{\rm max}}_{x,y \in M} \left\Vert x-y \right\Vert\]

The diameter of the manifold, which is maximum pairwise distance, can be viewed as {\em dynamic range\/} of the corresponding image class. We want to bound the perturbation image relative to the radius. See Figure~\ref{fig_theorem_arbitray_manifold}.

\begin{theorem}\label{theorem_adv_negatives}
Consider any image classification problem at infinite resolution. Consider any solution of this problem by a deep learning image classification system $D_n = (A_n,T_n,G_n,Q_n)$ for any finite resolution $n = r \times c$. For any random sample $x \in \mathbb{R}^n$ which is classified positively for one of the image classes $c_x$, let $y \in \mathbb{R}^n$ be the closest adversarial example where $A_n(x) \neq A_n(y)$ and let $y = x + p$ where $p \in \mathbb{R}^n$ is the perturbation image. Denote the perturbation $L_2$ norm $\left\Vert p \right\Vert$ for $x$ by $P(x)$. Denote the trained manifold for image class $c_x$, for which $x$ was positively classified, by $M(x)$.

Assume that all trained manifolds in $Q_n$ are $n$-dimensional sets. Let $\alpha$, where $0 < \alpha \leq 1$ be the perturbation bound.

Then, over all $x \in Q_n$,
\[ \lim_{n \to \infty} \mathop{\rm{Prob}}_{x \in Q_n} \left\{ \left\Vert P(x) \right\Vert \leq \alpha \; R(M(x)) \right\} = 1\]
and
\[ \lim_{n \to \infty} \mathop{\mathbb{E}}_{x \in Q_n} \left( \frac{\left\Vert P(x) \right\Vert}{R(M(x))} \right) = 0\]
\end{theorem}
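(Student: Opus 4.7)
The plan is to reduce the statement to a single volume-concentration inequality that generalizes the $n$-ball computation of Lemma~\ref{lemma_sphere}, with the Brunn--Minkowski inequality replacing the explicit spherical shell integral. Fix $x$ and write $M = M(x)$, $R = R(M)$, and define the ``inner core''
\[M_\alpha = \{z \in M : d(z, M^c) > \alpha R\},\]
so that $P(z) = d(z, M^c)$ and hence $\{z \in M : P(z) > \alpha R\} = M_\alpha$. The entire argument reduces to showing $\mathrm{Vol}(M_\alpha)/\mathrm{Vol}(M) \leq (1-\alpha/2)^n$, after which both claims follow by simple integration.

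For the volume bound, observe that $M_\alpha + \overline{B(0, \alpha R)} \subseteq M$, so Brunn--Minkowski gives
\[\mathrm{Vol}(M)^{1/n} \geq \mathrm{Vol}(M_\alpha)^{1/n} + V_n^{1/n}\, \alpha R,\]
where $V_n$ is the volume of the unit $n$-ball. The diameter condition $\mathrm{diam}(M) = 2R$ forces $M \subseteq B(x_0, 2R)$ for any $x_0 \in M$, hence $\mathrm{Vol}(M)^{1/n} \leq 2R\, V_n^{1/n}$. Combining the two,
\[\frac{\mathrm{Vol}(M_\alpha)^{1/n}}{\mathrm{Vol}(M)^{1/n}} \leq 1 - \frac{\alpha R\, V_n^{1/n}}{\mathrm{Vol}(M)^{1/n}} \leq 1 - \frac{\alpha}{2},\]
i.e., $\mathop{\rm{Prob}}_{x \in M}\{P(x) > \alpha R\} \leq (1-\alpha/2)^n$, which vanishes as $n \to \infty$. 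The bound is uniform across classes and transfers to $x \in Q_n$ as a volume-weighted combination of its constituent manifolds.

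For the expectation, first note $P(x) \leq R$: if $d(x, M^c) = s$ then $\overline{B(x, s-\epsilon)} \subseteq M$ contains antipodal points at distance $2(s-\epsilon)$, forcing $s \leq R$. Thus $P(x)/R \in [0,1]$, and by tail integration
\[\mathop{\mathbb{E}}_{x \in M}\!\left(\frac{P(x)}{R}\right) = \int_0^1 \mathop{\rm{Prob}}_{x \in M}\{P(x)/R > t\}\, dt \leq \int_0^1 (1-t/2)^n\, dt = \frac{2}{n+1}\bigl(1-2^{-(n+1)}\bigr),\]
which is $O(1/n)$ and tends to $0$; averaging across classes preserves this rate.

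The main obstacle is the Brunn--Minkowski step, which requires $M$ to be a measurable subset of $\mathbb{R}^n$ with positive Lebesgue volume; this is exactly what the hypothesis ``all trained manifolds in $Q_n$ are $n$-dimensional sets'' guarantees. Beyond this and the enclosing-ball estimate $M \subseteq B(x_0, 2R)$, no regularity or convexity of $M$ is needed, so the argument works for the arbitrary geometries the theorem is designed to cover.
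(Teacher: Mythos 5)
Your argument is correct but takes a genuinely different route from the paper's. The paper compares $M$ to the $n$-ball $B$ of \emph{equal volume}, asserts (invoking the isoperimetric inequality) that for every $\beta$ the equal-volume ball $B$ has a smaller fraction of its volume within distance $\beta$ of its boundary than $M$ does, and then reduces to the explicit $n$-ball computation of Lemma~\ref{lemma_sphere} to get the bound $1-(1-\alpha)^n$. You instead bypass the equal-volume comparison entirely: you apply Brunn--Minkowski directly to the inner parallel body $M_\alpha$ and combine it with the crude enclosing-ball estimate $M \subseteq B(x_0, 2R)$ coming from $\mathrm{diam}(M) = 2R$, obtaining the slightly weaker but still exponential bound $(1-\alpha/2)^n$. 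Your route is arguably more self-contained and rigorous: the paper's step from ``the ball minimizes surface area'' to ``the ball maximizes average distance to the boundary'' is not the classical isoperimetric inequality itself but the inner-parallel-body corollary of Brunn--Minkowski, which the paper asserts rather than proves; your argument makes exactly that inequality explicit. You also supply the tail-integration computation for the expectation part (bounded by $\tfrac{2}{n+1}(1-2^{-(n+1)})$), which the paper leaves as ``follows using similar steps.'' What you give up is constant-factor sharpness: the paper's comparison-to-the-ball yields the exact $(1-\alpha)^n$ rate that Lemma~\ref{lemma_sphere} provides, while your enclosing-ball step introduces the factor of $2$ in the exponent; this costs nothing for the stated limit, and in exchange you avoid having to establish that the probability and expectation are genuinely monotone under the volume-preserving comparison.
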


\begin{proof}
This is general case when the manifolds are arbitrary. For a given finite volume, $n$-ball minimizes the surface area and therefore maximizes the average distance of a point to the surface, which follows from the isoperimetric inequality~\cite{federer_isoperimetric_inequality}. The $n$-ball is the worst case manifold for the proof and if manifolds are all $n$-balls, then the theorem is proved by Lemma~\ref{lemma_sphere}. For any other geometry, the probability of being close to surface of the object will be strictly higher than the case for $n$-ball and the average distance to the surface will be smaller.

Without loss of generality consider the case when there is a single positive image class and a negative image class. Let the trained manifold be $M$ in $\mathbb{R}^n$. Let $B$ be $n$-ball such that
\[{\rm Volume}(M) = {\rm Volume}(B)\]
Then,
\[R(M) \geq R(B)\]
and for all $\beta > 0$,
\[\mathop{{\rm Prob}}_{x \in M} \left\{ \left\Vert P(x) \right\Vert \leq \beta \right\} \geq \mathop{{\rm Prob}}_{x \in B} \left\{ \left\Vert P(x) \right\Vert \leq \beta \right\}\]
and
\[\mathop{\mathbb{E}}_{x \in M} \left(\left\Vert P(x) \right\Vert \right) \leq \mathop{\mathbb{E}}_{x \in B} \left(\left\Vert P(x) \right\Vert \right) \]
Therefore, for $\alpha_B = \alpha \frac{R(M)}{R(B)} \geq \alpha$,
\begin{equation*}
\begin{split}
\mathop{{\rm Prob}}_{x \in M} \left\{ \left\Vert P(x) \right\Vert \leq \alpha \; R(M) \right\}
& = \mathop{{\rm Prob}}_{x \in M} \left\{ \left\Vert P(x) \right\Vert \leq \alpha_B \; R(B) \right\}  \\
& \geq \mathop{{\rm Prob}}_{x \in B} \left\{ \left\Vert P(x) \right\Vert \leq \alpha_B \; R(B) \right\} \\
& = 1 - (1 - \alpha_B)^n \\
& \geq 1 - (1 - \alpha)^n
\end{split}
\end{equation*}
Therefore limit follows for manifolds with arbitrary geomteries.

Proof of second part for expectation follows using similar steps.
\end{proof}

\begin{figure}[t]
    \centering
		\includegraphics[width=\textwidth]{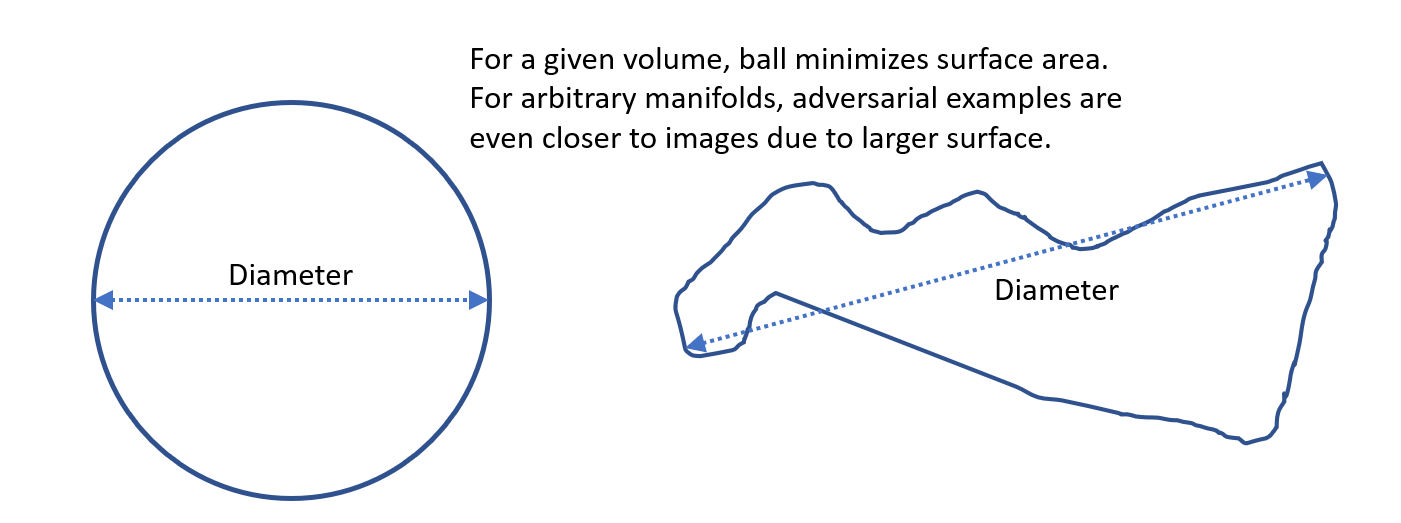}
    \caption{Proof of Theorem~\ref{theorem_adv_negatives} depends on the isoperimetric inequality according to which the $n$-ball attains minimum surface area among all possible $n$-dimensional objects with the same volume.}
    \label{fig_theorem_arbitray_manifold}
\end{figure}

For $k$-bit images, $M \subset [0,2^k-1]^n$ and is surrounded by a hypercube of diameter of the order of ${\sqrt n}$. Thus, $R(M)=O(\sqrt{n})$. Therefore we have the following theorem.
\begin{theorem}
\label{theorem_finite_pixel_range}
Let the pixel values be in a finite bounded range. Then,
\[\mathop{\mathbb{E}}_{x} \left( \left\Vert P(x) \right\Vert \right) = O\left(\frac{1}{\sqrt n}\right)\]
\end{theorem}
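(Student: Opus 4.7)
The plan is to combine Lemma~\ref{lemma_sphere} (via the isoperimetric comparison used inside Theorem~\ref{theorem_adv_negatives}) with an elementary geometric bound on the radius of any manifold sitting inside a hypercube. The key observation is that although Theorem~\ref{theorem_adv_negatives} was only stated as a limit, its proof actually produces a quantitative inequality: for any trained manifold $M \subset \mathbb{R}^n$, if $B$ is the $n$-ball of equal volume, then
\[
\mathop{\mathbb{E}}_{x \in M}\!\bigl(\|P(x)\|\bigr) \;\le\; \mathop{\mathbb{E}}_{x \in B}\!\bigl(\|P(x)\|\bigr) \;=\; \frac{R(B)}{n+1} \;\le\; \frac{R(M)}{n+1},
\]
where the first inequality is the isoperimetric comparison and the last uses $R(B) \le R(M)$ (both established inside the proof of Theorem~\ref{theorem_adv_negatives}).

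First I would state this quantitative consequence explicitly. Then I would translate the finite pixel range hypothesis into a bound on $R(M)$: since each pixel lies in $[0, 2^k-1]$, the trained manifold $M$ is contained in the hypercube $[0, 2^k-1]^n$, whose diameter is $(2^k - 1)\sqrt{n}$. By the definition of radius as half the maximum pairwise distance, this immediately gives $R(M) \le \tfrac{1}{2}(2^k-1)\sqrt{n}$, i.e.\ $R(M) = O(\sqrt{n})$ with $k$ treated as a constant.

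Finally I would chain the two bounds: substituting $R(M) = O(\sqrt{n})$ into $\mathbb{E}(\|P(x)\|) \le R(M)/(n+1)$ yields $\mathbb{E}(\|P(x)\|) = O(\sqrt{n}/n) = O(1/\sqrt{n})$, which is the claimed rate. Since there can be several trained manifolds $M(x)$ indexed by the class of $x$, I would note that the bound $R(M(x)) \le \tfrac{1}{2}(2^k-1)\sqrt{n}$ holds uniformly in the class, so taking expectation over $x$ (and averaging across classes) preserves the same $O(1/\sqrt{n})$ rate.

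There is no real obstacle here beyond bookkeeping: the content is entirely carried by Lemma~\ref{lemma_sphere} and the isoperimetric comparison already used in Theorem~\ref{theorem_adv_negatives}. The only step that requires mild care is making sure that the proof of Theorem~\ref{theorem_adv_negatives} is read as a quantitative inequality rather than merely an asymptotic statement, so that the explicit factor $1/(n+1)$ from the $n$-ball case transfers to the general manifold $M$; once that is spelled out, enclosing $M$ in the hypercube finishes the theorem in one line.
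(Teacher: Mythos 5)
Your proposal is correct and takes essentially the same route as the paper: the paper's proof of Theorem~\ref{theorem_adv_negatives} already establishes the isoperimetric comparison and the inequality $R(B)\le R(M)$, Lemma~\ref{lemma_sphere} supplies the $R/(n+1)$ rate for the ball, and the paper then asserts $R(M)=O(\sqrt n)$ from the hypercube enclosure without further detail. You have simply made explicit the quantitative chain $\mathbb{E}_{x\in M}(\|P(x)\|)\le R(B)/(n+1)\le R(M)/(n+1)=O(1/\sqrt n)$ that the paper leaves implicit in the sentence preceding the theorem.
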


\begin{figure}[t]
    \centering
		\includegraphics[width=\textwidth]{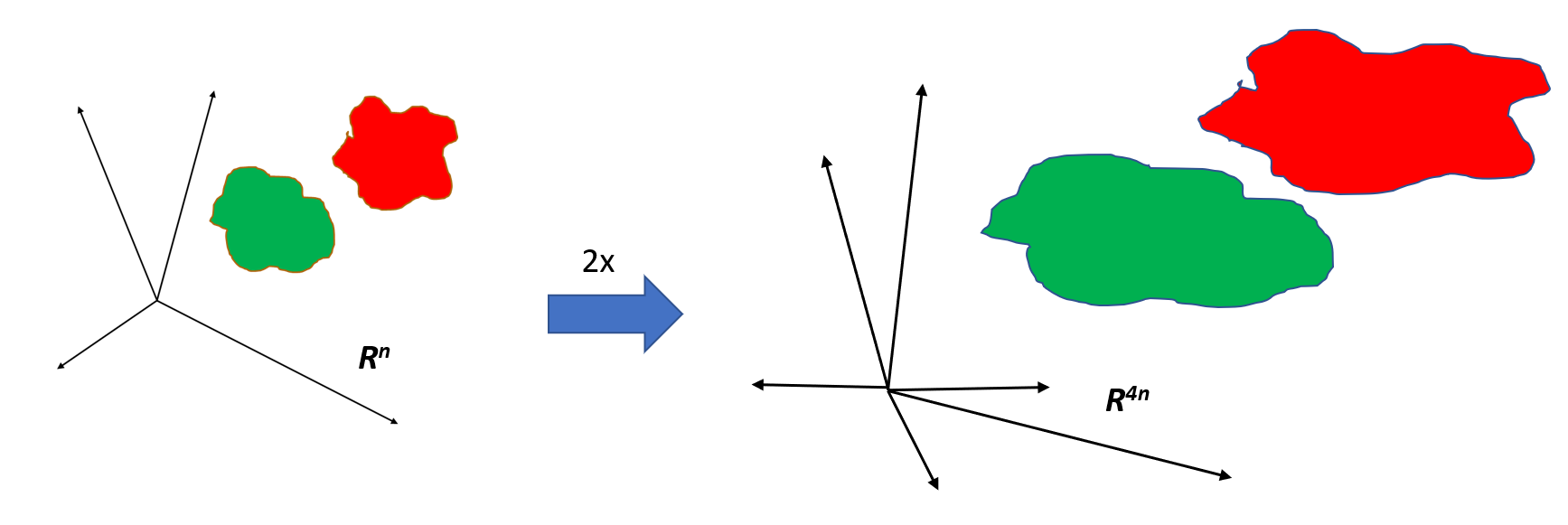}
    \caption{If image resolution is doubled, the radii of the manifolds increase. Proof of Theorem~\ref{theorem_finite_pixel_range} depends on the assumption that pixel values are $k$-bit, and therefore $R(M)=O(\sqrt{n})$. In Theorems~\ref{theorem_1f2} and~\ref{theorem_adv_neg_2}, we will remove this assumption for natural images.}
    \label{fig_manifold_expansion}
\end{figure}

Note that for the proof to work, we implicitly require that each positive image manifold is surrounded completely by its complement which will be the case if it is compact and the space is $\mathbb{R}^n$. For 8-bit images the total space in $[0,255]^n$ and as long as the positive manifolds are surrounded by their respective complements so that one can move out of the surfaces into the complement regions, proof will hold.

\subsection{Case of Linear Models}
In Section 3 of the reference~\cite{explaining_goodfellow}, it is argued that even simple linear models can suffer from adversarial negatives and problem in this argument has been independently highlighted earlier in~\cite{tanay_griffin}. The authors consider a simple linear model $w^T x$. Let $y = x + p$ be the adversarial negative with $p$ being the perturbation. See Figure~\ref{fig_google_fallacy}.

\begin{figure}[t]
    \centering
		\includegraphics[width=\textwidth]{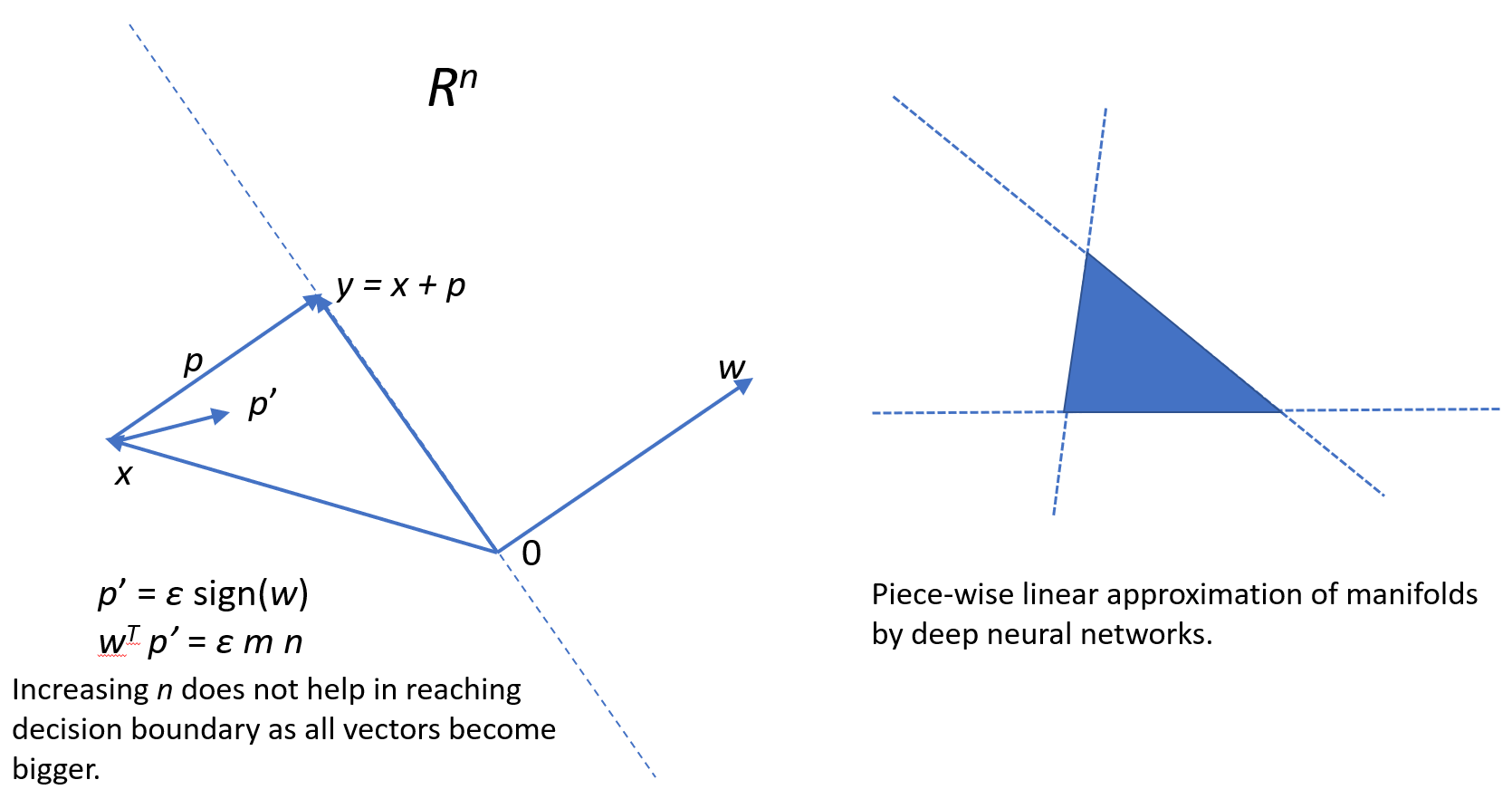}
    \caption{{\em Left}, mistake in a published paper by Goodfellow et al. in 2015 regarding an argument made to show that linear model $w^T x$ will suffer from adversarial examples. Fallacy lies in the fact that distances in spaces with different dimensions can not be compared, as everything becomes bigger as $n$ increases. {\em Right}, for deep networks we want results to apply to manifolds bounded by hyperplanes.}
    \label{fig_google_fallacy}
\end{figure}

Then,
\[w^T y = w^T x + w^T p\]
In~\cite{explaining_goodfellow}, $L_\infty$-norm of perturbation is constrained to a small value $\epsilon$ which is fixed to be the smallest greyness granularity, which is 1/255 of the dynamic range for 8-bit images. It is first argued that if $m$ is the average magnitude of coefficients of the weight vector $w$ and we choose $p' = \epsilon \; {\rm sign}(w)$ to be closely aligned with $w$, then for $n$ dimensions, $w^T p' = \epsilon m n$ will increase with $n$ to reach any target which will push the point on the other side of the decision hyperplane. We want $\epsilon m n$ to reach activation level to be negative of $w^T x$. It is argued in~\cite{explaining_goodfellow} that this is possible with increasing $n$. With increasing $n$, $w^T x$ also increases because $w$ and $x$ increase in their magnitudes as they have more elements in higher dimensional spaces. Therefore if we keep $L_\infty$-norm of perturbation constrained as in~\cite{explaining_goodfellow} to some small $\epsilon$, we can not generate adversarial examples using this argument for the linear model. In order to generate adversarial examples, we will have to set $L_\infty$-norm constraint to a higher value which will depend on the distance of $x$ from the decision boundary. But then there is no upper bound on the max norm due to fact that there is no upper bound on the distance of feature vector to the decision hyperplane. Even if we constrain the feature value range to be finite, such as [0,255] for 8-bit images, even then the distances can be fairly large in general case. Therefore, the generalization of the argument in~\cite{explaining_goodfellow} to deep networks is invalid.

Therefore, we can state the following.
\begin{theorem}
\label{linearity_theorem}
The linear model does not suffer from adversarial examples under $L_p$-norm constraint for any $p \geq 1$.
\end{theorem}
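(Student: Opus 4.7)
The plan is to formalize the informal critique of Goodfellow et al.\ given in the paragraphs immediately preceding the statement, replacing the case-by-case discussion with a single uniform argument based on H\"older's inequality. I would write the linear classifier as $f(x) = \mathrm{sgn}(w^T x - b)$, so that a perturbation vector $p$ flips the label of $x$ iff $w^T p$ has sign opposite to $w^T x - b$ and $|w^T p| \geq |w^T x - b|$. Writing $r \geq 1$ for the norm exponent of the constraint $\|p\|_r \leq \epsilon$ from the statement, H\"older duality gives $\sup_{\|p\|_r \leq \epsilon} |w^T p| = \epsilon \|w\|_s$ where $1/r + 1/s = 1$ (with $s = 1$ when $r = \infty$). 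Hence $x$ admits any adversarial example at all iff it lies in the slab $\{z : |w^T z - b| \leq \epsilon \|w\|_s\}$ about the decision hyperplane.

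The remainder of the proof is a scale comparison. The slab half-width is $\epsilon \|w\|_s$, while the typical margin $|w^T x - b|$ grows unboundedly with $n$: as the authors already point out, both $\|x\|$ and $\|w\|$ grow as coordinates are added. Under any input measure supported in a bounded dynamic-range cube $[0, C]^n$, and more generally under any measure whose coordinates do not conspire to align with $w$, a standard second-moment or central limit calculation shows that $|w^T x - b|$ has typical size $\Theta(\sqrt{n}\,\|w\|_2)$, while $\|w\|_s \leq n^{\max(0,\, 1/s - 1/2)}\,\|w\|_2$ by the usual $\ell_s$-norm comparison. Even in the extremal case $r = \infty$, $s = 1$ considered by Goodfellow et al., the slab half-width is at most $\epsilon\sqrt{n}\,\|w\|_2$, of the same order as the typical margin; for any fixed imperceptibly small $\epsilon$ (such as the $1/255$ used in~\cite{explaining_goodfellow}), the slab therefore captures only a vanishing fraction of positively classified samples as $n \to \infty$.

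The main obstacle is interpretational rather than technical: the phrase ``does not suffer from adversarial examples'' must be turned into a precise quantitative claim, and the strength of the conclusion depends on what distribution of inputs one assumes. My proposal would adopt the convention already used in Theorem~\ref{theorem_adv_negatives} --- namely, that the probability under $x \in Q_n$ of admitting an $\epsilon$-bounded adversarial $p$ tends to $0$ as $n \to \infty$ --- and derive it from the two ingredients above. It is worth emphasising what this does \emph{not} claim: points that truly lie in the slab always admit adversarial examples, and a worst-case $x$ on the decision boundary always does. What fails is the universal adversarial property claimed in~\cite{explaining_goodfellow}, because the $\epsilon m n$-style growth in $|w^T p|$ that argument exploits is exactly cancelled by the matching growth of $|w^T x - b|$ for generic $x$.
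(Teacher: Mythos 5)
Your approach is genuinely different from the paper's, and it is worth saying how. The paper's own ``proof'' of Theorem~\ref{linearity_theorem} is a one-liner: since the domain $\mathbb{R}^n$ is unbounded and the decision boundary of a linear model is a hyperplane, the distance from $x$ to that hyperplane --- and hence the norm $\|P(x)\|_p$ of the closest flipping perturbation --- ranges over all of $\mathbb{R}^+$. There is therefore no uniform $\epsilon$ that bounds $\|P(x)\|_p$ for all positively classified $x$, which the paper takes as the meaning of ``does not suffer from adversarial examples.'' That is a pure existence / unboundedness argument, with no distributional content. Your proposal instead formalizes the \emph{informal} discussion that precedes the theorem: you use H\"older duality to reduce the question to membership in the slab $\{z : |w^T z - b| \le \epsilon\|w\|_s\}$, and then make a probabilistic claim that the fraction of positively classified $x$ inside the slab vanishes as $n\to\infty$. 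This is a strictly stronger statement than the paper proves, and the H\"older reduction is a clean idea the paper does not use.

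However, the distributional half of your argument has a real gap, precisely at the claim that the typical margin $|w^T x - b|$ has size $\Theta(\sqrt{n}\,\|w\|_2)$. Write $\mu = \mathbb{E}[x]$ and decompose $w^T x - b = (w^T\mu - b) + w^T(x-\mu)$. The fluctuation term has standard deviation $\Theta(\|w\|_2)$ by the second-moment calculation you cite. The deterministic term $w^T\mu - b$ can be anywhere between $0$ and $\Theta(\sqrt{n}\,\|w\|_2)$: by Cauchy--Schwarz $|w^T\mu| \le \sqrt{n}\,\|\mu\|_\infty\|w\|_2$, but for a trained classifier $b$ is tuned so that the hyperplane \emph{separates} the two class-conditional distributions, which tends to place $b$ near $w^T\mu$ and leaves a margin governed by the class separation, not by $\sqrt{n}$. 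If the per-class conditional margin scales like the fluctuation term --- that is, like $\Theta(\|w\|_2)$ rather than $\Theta(\sqrt{n}\,\|w\|_2)$ --- then for $r=\infty$, $s=1$ the slab half-width $\epsilon\|w\|_1 \lesssim \epsilon\sqrt{n}\,\|w\|_2$ eventually swallows the margin, and Goodfellow's conclusion survives your argument. Your caveat ``whose coordinates do not conspire to align with $w$'' also points in the wrong direction: you actually \emph{need} alignment between $\mu$ and $w$ (and a non-centering choice of $b$) to get the $\sqrt{n}$ growth you assert. So to close the gap you would need an explicit assumption of the form $|w^T\mu - b| = \Theta(\sqrt{n}\,\|w\|_2)$, plus a concentration statement placing the fluctuations at a lower order; as written the scaling is asserted rather than derived. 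The paper's much weaker unboundedness proof sidesteps this entirely, at the cost of proving a much weaker statement.
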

\begin{proof}
Proof follows from the fact that for any resolution $n$,
\[ \{ \left\Vert P(x) \right\Vert_p | x \in \mathbb{R}^n \} = \mathbb{R}^+ \]
where $\mathbb{R}^+$ is set of non-negative reals.
\end{proof}

At the same time, Theorem~\ref{theorem_adv_negatives} in this paper will hold for all bounded manifolds. This is because image manifolds are assumed to be bounded and surrounded by their complement. As $n$ increases, the surface increases rapidly having more volume close to it, thereby bringing points closer to some surface where adversarial examples exist. The size of the manifold also increases and therefore the perturbation bound in Theorem~\ref{theorem_adv_negatives} is expressed relative to this size. The size of the manifold does not increase that fast, and therefore perturbations become smaller even in absolute sense. This will also hold if the manifold is bounded from all sides by hyperplanes as a special case as shown in Figure~\ref{fig_google_fallacy}. Deep networks perform this piecewise linear approximation of functions with ReLU activation, see~\cite{bengio_linear}, so our results apply to deep networks as a special case.

\section{Stochastic Gradient Descent and Optimization Landscape}
\label{section_sgd}
Let's consider the question why the deep learning works well. Why has there been success in training deep networks? One of the reasons is that it does not get stuck in local minima, see~\cite{yann_dauphin}. Neural networks of smaller sizes get trapped in local minima but this seems to be unlikely in the case of deep networks. It has been empirically shown that most of critical points encountered during stochastic gradient descent are saddle points as they are more numerous than local minima, see~\cite{loss_surface_choromanska, yann_dauphin} for these results and how random matrix theory can be applied to investigate this.

\subsection{Optimization Landscape Polynomials}
We adopt a related but different approach and use results from theory of random polynomials which give explicit bounds on number of critical points. In order to understand this well we will bound the number of saddle points and that of local minima by approximating loss surfaces, also referred to as {\em optimization landscapes\/}, with polynomials. For polynomial approximation of deep networks, see~\cite{poggio_2, poggio_3} where ReLU is approximated by a polynomial and results on singularity of the Hessian are derived for overparameterized systems.

Key observation is that all the operations used in deep learning are continuous even though they may not be differentiable.

\begin{theorem}
\label{theorem_opti_poly}
Optimization landscape of a deep learning network can be approximated by a polynomial.
\end{theorem}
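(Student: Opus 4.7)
The plan is to invoke the Stone--Weierstrass theorem, which guarantees that any continuous real-valued function on a compact subset of Euclidean space can be uniformly approximated, to arbitrary precision, by a polynomial in the coordinate variables. Viewing the loss surface $L(\theta)$ as a function of the trainable parameters $\theta$, the theorem reduces to two subgoals: first, verifying that $L$ is continuous, and second, arguing that it suffices to work over a compact domain.

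For continuity, I would enumerate the building blocks from which a deep network is assembled and appeal to the observation flagged just before the statement: every standard operation is continuous in $(\theta, x)$, even when it fails to be differentiable. Fully connected layers, convolutions, and affine residual connections are already polynomial in $\theta$ for fixed $x$; elementwise activations such as ReLU, leaky ReLU, sigmoid, tanh, and softplus are continuous; max-pooling, average-pooling, and softmax are continuous; and the usual losses (mean squared error, cross-entropy applied after softmax) are continuous in the network outputs and labels. Since the composition of finitely many continuous maps is continuous, and the empirical loss is obtained by summing or averaging such compositions over a finite training set, $L:\theta \mapsto L(\theta)$ is continuous on all of the parameter space $\mathbb{R}^N$.

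For compactness, I would observe that in any practical training setting the weights live in a bounded region: either implicitly, through bounded initialization combined with weight decay, or explicitly, through clipping, projection, or normalization. Hence we may restrict attention to a closed ball $K = \{\theta \in \mathbb{R}^N : \|\theta\| \le M\}$, which is compact. Stone--Weierstrass then supplies, for every $\epsilon > 0$, a polynomial $p(\theta)$ satisfying $\sup_{\theta \in K} |L(\theta) - p(\theta)| < \epsilon$, which is the claim.

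The main obstacle is that Stone--Weierstrass delivers only $C^0$ (uniform) approximation, whereas the subsequent use in Section~\ref{section_sgd} appeals to the landscape's critical-point structure and therefore implicitly wants approximation of gradients and Hessians as well. I would handle this either by stating the theorem in the $C^0$ sense and treating the polynomial as a heuristic proxy used only to count critical points of comparable models, or by strengthening the hypotheses to smooth activations (e.g.\ sigmoid, tanh, softplus) and invoking the $C^k$ version of the Weierstrass approximation theorem on $K$, which yields simultaneous uniform approximation of $L$ and its first two derivatives. I would present the Stone--Weierstrass route as the main argument and note the $C^k$ refinement as a remark, since that is what the random-polynomial analysis in the next subsection actually uses.
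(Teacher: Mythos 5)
Your proposal matches the paper's proof exactly: invoke Stone--Weierstrass, verify continuity of each layer type (the paper explicitly notes ReLU and max pooling are continuous though non-differentiable, and even gives the identity $\mathrm{ReLU}(x) = \tfrac{1}{2}(x + |x|)$ to show how a polynomial approximation arises), and implicitly restrict to a compact domain. Your closing remark about the $C^0$ versus $C^k$ gap---that uniform approximation of $L$ alone does not control the critical-point structure used in the random-polynomial analysis of Section~\ref{section_sgd}---is a genuine and worthwhile observation that the paper does not address.
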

\begin{proof}
The proof follows from Stone-Weierstrass approximation theorem which states that any continuous function on a compact set can be approximated by a polynomial~\cite{stone}.
\end{proof}

The continuity of functions is obvious for convolution layers, fully connected layers and differentiable loss functions. This is true of nonlinear layers such as ReLU and Max Pooling. Though they are not differentiable but they are continuous as their graphs don't have breaks for continuous inputs. ReLU is piecewise linear. To see immediately how ReLU activation layer can be approximated by a polynomial, notice that it can be written as follows
\[
\rm{ReLU}(x)  = \rm{max}(0,x) = \frac{1}{2} (x+|x|)
\]
in terms of absolute function which is known to have polynomial approximation. Max pooling is continuous for continuous inputs. Smooth versions of max are well-known. Notice that max can be expressed as follows
\begin{equation*}
\rm{max}(x,y)  = \rm{ReLU}(x-y) + \frac{\rm{ReLU}(x-y)}{|x-y|} y + \rm{ReLU}(y-x) + \frac{\rm{ReLU}(y-x)}{|x-y|} x
\end{equation*} 

In fact, we have two kinds of polynomials:
\begin{itemize}
\item
{\em Optimization Landscape Polynomials}. We fix the image pixel values. Neural networks parameters are variables.
\item
{\em Image Polynomials}. We fix the neural network parameters. Image pixel values are variables.
\end{itemize}

In both cases, we will have a very large polynomial. The degree of the polynomial goes up as layers in the network increase. As number of network parameters and image resolution go up, then the number of variables goes up.

In this section, we are concerned with optimization landscape polynomials. In Section~\ref{section_statistics_natural}, we will discuss image polynomials.

What can we say about number of saddle points and that of local minima of deep network polynomials? Since deep learning is a practical field, one will have to make assumptions based on empirically derived statistics on the coefficients of the polynomials based on particular applications. One will have to endow the space of deep network polynomials with a probability measure in order to derive mathematical results. We make use of results from theory of random polynomials endowed with Gaussian probability measure with certain assumptions on variances, see~\cite{dedieu_malajovich, dean_majumdar}, which provides evidence why deep learning works well in practice.

\begin{theorem}[\emph{Critical points of random polynomials, see~\cite{dedieu_malajovich}}]
\label{theorem_dedieu}
Let $C_{d,n}$ denote the expected number of critical points of a random polynomial of degree at most $d$ in $n$ variables,  and $E_{d,n}$ the expected number of minima. Let $P_n$ be probability that a critical point is a local minima. Then,
\begin{equation*}
\begin{split}
C_{d,n} & \leq \sqrt{2} (d-1)^{(n+1)/2} \\
E_{d,n} & \leq C_{d_n} P_n 
\end{split}
\end{equation*}
and
\[P_n \leq C \ \rm{exp} \left( -n^2 \frac{\ln 3}{4} \right) \]
for some positive constant $C$.
\end{theorem}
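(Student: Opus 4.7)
The plan is to derive both bounds from the Kac--Rice formula together with classical large-deviation estimates for positive-definiteness of Gaussian random matrices. Counting critical points of $f$ amounts to counting zeros of the random vector field $\nabla f : \mathbb{R}^n \to \mathbb{R}^n$, while a critical point is a local minimum precisely when the Hessian $H$ there is positive definite. These are two separable problems, which I would attack in turn.

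First I would apply the Kac--Rice formula,
\[
C_{d,n} \;=\; \int_{\mathbb{R}^n} \mathbb{E}\bigl[\,|\det H(x)| \;\big|\; \nabla f(x) = 0\,\bigr]\, p_{\nabla f(x)}(0)\, dx,
\]
to the random polynomial $f$ drawn from the ensemble considered in \cite{dedieu_malajovich}. Since that ensemble is the scale-invariant Kostlan/Shub--Smale Gaussian measure, the joint law of $(f, \nabla f, H)$ is $O(n)$-invariant, so the integrand depends only on $\|x\|$ and reduces to a one-dimensional integral. Bounding $\mathbb{E}[\,|\det H|\,]$ by $\mathbb{E}[(\det H)^2]^{1/2}$ via Cauchy--Schwarz and expanding the second moment by Wick's rule yields the factor $(d-1)^{(n+1)/2}$, while tracking the Gaussian normalizations supplies the prefactor $\sqrt{2}$.

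For the inequality $E_{d,n} \leq C_{d,n}\, P_n$, I would again invoke Kac--Rice, this time with the additional indicator that $H(x)$ is positive definite under the conditional law on $\{\nabla f(x) = 0\}$, and bound that conditional probability uniformly in $x$ by $P_n$. For the exponential decay of $P_n$ itself, the key observation is that at a conditioned critical point the Hessian, after the natural rescaling prescribed by the covariance structure of the ensemble, is a GOE matrix possibly shifted by a scalar times the identity. The Dean--Majumdar large-deviation principle \cite{dean_majumdar} then gives $\Pr(\mathrm{GOE\ is\ positive\ definite}) = \exp(-\theta n^2 + o(n^2))$ with rate $\theta = (\ln 3)/4$, which is exactly the constant in the statement; absorbing the subleading terms into the constant $C$ finishes the estimate.

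The main obstacle I expect is controlling the conditional distribution of $H$ carefully enough to apply Dean--Majumdar verbatim: one must verify that the spectral shift induced by conditioning on $\nabla f(x) = 0$ is either zero by symmetry or is subleading, so that the sharp rate $(\ln 3)/4$ is not degraded. The polynomial bound on $C_{d,n}$ is relatively robust, since Cauchy--Schwarz combined with Wick's rule loses at most constant factors; by contrast, matching the precise exponent $(\ln 3)/4$ requires the full Coulomb-gas large-deviation machinery, and this is the step where I would follow \cite{dean_majumdar} closely rather than attempt an independent derivation.
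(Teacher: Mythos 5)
The paper does not prove this theorem: it is presented as a quotation of Dedieu--Malajovich and Dean--Majumdar, and the text following the corollary says only ``For proof, see~\cite{dedieu_malajovich, dean_majumdar}.'' So there is no in-paper argument to compare yours against. That said, your sketch is consistent with the methodology of the cited references: Kac--Rice on the Kostlan/Shub--Smale ensemble, $O(n)$-invariance to reduce to a radial integral, a moment bound on $\left| \det H \right|$ to extract the $(d-1)^{(n+1)/2}$ growth, and the Dean--Majumdar Coulomb-gas large-deviation principle giving $\Pr(\mathrm{GOE}\ \text{positive definite}) \approx \exp\!\left(-\tfrac{\ln 3}{4}\, n^2\right)$. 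You also correctly flag the real technical delicacy, namely that conditioning on $\nabla f(x)=0$ produces a Hessian that is a GOE matrix shifted by a (random or deterministic) multiple of the identity, and one must verify the shift does not degrade the rate constant $(\ln 3)/4$; this is precisely the step that makes the proof nontrivial and where one must follow the references carefully. One small caveat: with $P_n$ defined in the statement as ``the probability that a critical point is a local minimum,'' the relation $E_{d,n} \leq C_{d,n} P_n$ is essentially a definitional or size-biasing identity rather than something to be deduced from a uniform bound on a conditional probability inside Kac--Rice, so the substance of your third paragraph is really only the large-deviation estimate for $P_n$ itself, which you correctly route through \cite{dean_majumdar}.
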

\begin{corollary}
For some positive constant $K$,
\[E_{d,n} \leq K \ \rm{exp} \left( -n^2 \frac{\ln 3}{4} + \frac{n+1}{2} {\ln(d-1)} \right) \]
\end{corollary}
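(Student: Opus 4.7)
The plan is to simply chain together the three bounds supplied by Theorem~\ref{theorem_dedieu} and then rewrite the product of a power and an exponential as a single exponential. More concretely, I would first use the second inequality $E_{d,n} \leq C_{d,n} P_n$ to reduce the claim to a bound on the product $C_{d,n} P_n$, then substitute the bound $C_{d,n} \leq \sqrt{2}\,(d-1)^{(n+1)/2}$ for the first factor and $P_n \leq C \exp(-n^2 \ln 3 / 4)$ for the second factor.

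The second step is purely algebraic: rewrite
\[
(d-1)^{(n+1)/2} = \exp\!\left(\tfrac{n+1}{2}\ln(d-1)\right),
\]
so that after multiplication the right-hand side becomes
\[
\sqrt{2}\,C \, \exp\!\left(\tfrac{n+1}{2}\ln(d-1)\right) \exp\!\left(-n^2\tfrac{\ln 3}{4}\right) = \sqrt{2}\,C \, \exp\!\left(-n^2\tfrac{\ln 3}{4} + \tfrac{n+1}{2}\ln(d-1)\right).
\]
Setting $K := \sqrt{2}\,C$ yields the stated inequality.

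There is essentially no obstacle here; the corollary is a one-line consequence of Theorem~\ref{theorem_dedieu}, obtained by folding the polynomial prefactor $(d-1)^{(n+1)/2}$ into the exponent. The only thing worth remarking on is qualitative: because the negative term $-n^2 \ln 3/4$ grows quadratically in $n$ while the positive term $\tfrac{n+1}{2}\ln(d-1)$ grows only linearly in $n$ (for fixed $d$), the exponent is eventually dominated by the negative quadratic contribution, so $E_{d,n}$ decays super-exponentially as $n \to \infty$. This is the conceptual content the corollary is designed to expose, but the proof itself requires nothing beyond the substitution described above.
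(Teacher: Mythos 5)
Your proposal is correct and is precisely the intended argument: the paper leaves the corollary as an immediate consequence of Theorem~\ref{theorem_dedieu} (its ``proof'' pointer refers to the references for the theorem itself), and your chaining of $E_{d,n}\leq C_{d,n}P_n$ with the two stated bounds, folding the power $(d-1)^{(n+1)/2}$ into the exponent and absorbing $\sqrt{2}\,C$ into $K$, is exactly that one-line derivation.
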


For proof, see~\cite{dedieu_malajovich, dean_majumdar}. This result shows that most large random polynomials have only saddle points and local minima become increasingly rare. Assuming this result holds for most practical problems solved by deep learning, this implies that the number of local minima becomes arbitrarily small compared to number of saddle points as the resolution of images and size and depth of deep networks increase. The probability that all eigen values of the Hessian matrix of these polynomials are positive falls off very rapidly~\cite{dean_majumdar}.

Under the assumption that the loss surfaces of deep neural networks are normal random polynomials as in~\cite{dedieu_malajovich, dean_majumdar}, we can state the following.
\begin{prop}
Let $L(n)$ denote the ratio of the expected number of minima to the expected number of critical points of optimization landscape of a deep neural network of size $n$. Then,
\[\mathop{\lim}_{n\to\infty}L(n) = 0\]
\end{prop}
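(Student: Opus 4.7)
The plan is to read the proposition off as a near-immediate corollary of Theorem~\ref{theorem_dedieu}. By definition,
\[L(n) = \frac{E_{d,n}}{C_{d,n}},\]
where we model the optimization landscape of a deep network of size $n$ as a random polynomial of some degree $d = d(n)$ in $n$ variables (the $n$ trainable network parameters), endowed with the Gaussian probability measure on coefficients used in~\cite{dedieu_malajovich, dean_majumdar}. The second bound from Theorem~\ref{theorem_dedieu}, $E_{d,n} \leq C_{d,n} P_n$, then gives the key inequality $L(n) \leq P_n$.

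Next, I would plug in the exponential bound $P_n \leq C\,\exp(-n^2 \ln 3 /4)$ from the same theorem. Since the right hand side tends to $0$ as $n \to \infty$, and $L(n) \geq 0$ trivially (it is a ratio of nonneg quantities), a squeeze argument yields $\lim_{n \to \infty} L(n) = 0$, which is exactly the conclusion. Notice that the bound on $P_n$ does not involve $d$, so even if the degree of the optimization landscape polynomial grows with network depth, this growth does not interfere with the limit.

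The main (and really only) obstacle is interpretational rather than computational: one must verify that the normal random polynomial model underlying Theorem~\ref{theorem_dedieu} is a reasonable stand-in for the loss surface of an actual deep network, and that the parameter $n$ in the statement of the proposition matches the number of polynomial variables $n$ in the cited theorem. Both of these are adopted as modeling assumptions in this section (the first is stated explicitly just before Theorem~\ref{theorem_dedieu}, the second follows from identifying ``network size'' with the parameter count that indexes the optimization landscape polynomial). Under these assumptions, the proposition follows from the two inequalities above without any further work.
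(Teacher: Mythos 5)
Your proposal is correct and takes essentially the same route the paper intends: the paper presents the proposition without an explicit proof, treating it as an immediate consequence of Theorem~\ref{theorem_dedieu} under the stated modeling assumption, and your chain $L(n) = E_{d,n}/C_{d,n} \leq P_n \leq C\exp(-n^2\ln 3/4) \to 0$ is exactly the intended argument. You also correctly flag that the real content is the modeling assumption (loss surface as a normal random polynomial) rather than the computation itself, and that the bound on $P_n$ being $d$-free is what makes the limit insensitive to how degree scales with network depth.
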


\subsection{Optimization on Loss Surfaces}
It has been shown in~\cite{bray_dean, loss_surface_choromanska, yann_dauphin}, both theoretically under certain assumptions as well as empirically, that the local minima are concentrated towards the bottom of the optimization landscape. Therefore during SGD we are likely to encounter local minima only towards the end of training which correspond to good enough solutions to practical problems. Earlier in the training, it is quite likely that slowing down is occuring because of a saddle point.

The above results also indicate why SGD and various tricks and techniques work in deep learning. Assume for a particular minibatch of training images, the deep network is at local minima or saddle point. As new minibatches arrive with their varying image statistics, optimization landscape changes and it becomes less likely that we will continue to be at the same critical point and it is more likely that we will escape from it despite slow down in the training time. This provides justification for stochastic algorithms.

One can also ask why empirically tested techniques such as ReLU activation, batch normalization and dropout have been effective in making training faster and more robust. Consider ReLU in which a dead neuron can become alive or vice versa thereby having ripple effects on the neurons it is connected with in higher layers and altering the optimization landscape significantly. This would probabilistically help SGD in escaping from a critical point. Let {\em index\/} be the fraction of negative eigen values of the Hessian matrix. Any empirically discovered technique which results in more than 50\% chance of reduction in index will help the training. Higher the chance, more effective it will be. These techniques perturb the optimization landscape in a statistically beneficial way and therefore they are helpful in making training faster with better solutions.

\begin{prop}
Any technique which speeds up escape from a critical point will speed up the deep learning training time. Furthermore, any technique which speeds up reduction of the Hessian index will speed up the deep learning training time.
\end{prop}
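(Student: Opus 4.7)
The plan is to formalize the informal proposition by modeling SGD as a stochastic process whose total wall-clock training time decomposes into productive descent steps plus time spent lingering in neighborhoods of critical points. Under this decomposition the first claim is nearly tautological and the second claim reduces to the empirically and theoretically supported fact (cited via \cite{bray_dean, loss_surface_choromanska, yann_dauphin} earlier in the excerpt) that the layered structure of the optimization landscape puts low-index critical points near the bottom, so that decreasing the Hessian index corresponds to descending in loss.

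First I would make the notion of ``speeds up escape'' precise. Fix a critical point $c$ of the loss surface and let $\tau(c)$ denote the expected number of SGD iterations before the trajectory leaves an $\varepsilon$-neighborhood of $c$ (here $\varepsilon$ is any small fixed radius). Write the total expected training time as $T = T_{\text{descent}} + \sum_{c \in \mathcal{C}} \pi(c)\,\tau(c)$, where $\mathcal{C}$ is the set of critical points visited and $\pi(c)$ is the expected number of visits. The ``escape-speedup'' hypothesis says exactly that the modified algorithm has $\tau'(c) < \tau(c)$ for each $c$ while leaving $T_{\text{descent}}$ and $\pi(\cdot)$ no larger (stochastic perturbations of the landscape, as argued in the paragraph preceding the proposition, do not increase revisit probability of the same critical point). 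The first sentence of the proposition then follows by monotonicity of $T$ in $\tau$.

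For the second sentence I would invoke Theorem~\ref{theorem_dedieu} and the layered-landscape results cited above: conditional on the critical-point index $k$, the expected loss value at the critical point is a monotonically decreasing function of $k$'s distance from $n/2$, with true local minima ($k=0$) lying close to the global minimum. Consequently a technique that, at each encountered critical point, produces a descendant iterate whose local Hessian has strictly smaller index with probability exceeding $1/2$ induces a supermartingale in the index coordinate, and by optional stopping the trajectory reaches $k=0$ in finite expected time; combined with the layer result, the loss value itself becomes a supermartingale and converges to the local-minimum band faster than the unmodified algorithm. Adding this to the first-sentence bound gives the claim.

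The hard part will be making the second sentence rigorous, because ``reduction of the Hessian index'' is only well-defined at critical points, while SGD spends most of its time away from them. I would bridge this gap by working with the index of the Hessian evaluated along the SGD trajectory and appealing to continuity of the eigenvalues of $\nabla^2 L$ in the network parameters (guaranteed by Theorem~\ref{theorem_opti_poly}, since the landscape is polynomial and hence $C^\infty$), so that index reductions observed en route persist to the next critical point encountered. The remaining subtlety is justifying that index reduction in the polynomial approximation transfers back to the true non-smooth ReLU network; this can be handled by taking a sequence of polynomial approximants converging uniformly on compacta and passing to the limit, but I would flag it as the main technical gap rather than attempt a full resolution here.
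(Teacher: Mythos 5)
The paper supplies no formal proof for this proposition; it is stated at the end of Section~\ref{section_sgd} as an informal consequence of the preceding discussion, which asserts that training slowdown is dominated by time spent near critical points and that low-index critical points concentrate toward the bottom of the landscape (citing~\cite{bray_dean, loss_surface_choromanska, yann_dauphin}). Your proposal is, in effect, a formalization of that informal argument rather than a different route: the decomposition $T = T_{\text{descent}} + \sum_c \pi(c)\tau(c)$ makes precise the paper's implicit claim that escape time is the bottleneck, and the supermartingale-in-index argument is a rigorous version of the paper's remark that a technique yielding ``more than 50\% chance of reduction in index will help the training.'' So you and the paper agree on the conceptual substance; you just go further in trying to prove it.

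That said, the additional rigor exposes assumptions the paper glosses over, and you should not undersell them. First, your step ``while leaving $T_{\text{descent}}$ and $\pi(\cdot)$ no larger'' is doing the real work in the first sentence, and it is not a consequence of ``speeds up escape''; a perturbation large enough to eject the iterate from a saddle neighborhood can also lengthen the productive phase or cause the trajectory to wander into a different, worse critical point, increasing $\pi$. Without an argument controlling these side effects (e.g., by restricting to perturbations that are small away from critical neighborhoods), the first sentence is not tautological, and your decomposition makes this visible rather than resolving it. Second, the optional-stopping argument needs the index process to be adapted to a filtration and bounded below, and needs the ``$>1/2$ probability of index decrease'' hypothesis to hold uniformly over the trajectory, none of which is established; the paper's layered-landscape citations support the heuristic but do not supply these uniformity conditions. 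You correctly flag the remaining gaps (index defined only at critical points, passage from the polynomial surrogate back to the ReLU network), and those flags are appropriate — but the two issues above are also load-bearing and should be flagged with equal weight, since as written the proof does not close them.
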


\section{Statistics of Natural Images, Adversarial Examples and Manifold Learning Hypothesis}
\label{section_statistics_natural}
Natural images have their own particular characteristics. We now derive stronger results for them.

\subsection{Perturbation Adversarial Examples}
It has been know that natural signals follow 1/$f^\beta$ process. For natural images which are 2-D signals $\beta$ is around 2~\cite{1f_images}. The power in different 2-D frequencies in natural images is inversely proportional to square of frequencies. Let us formulate this in terms of discrete wavelet multiresolution representation of natural images. A 2-D image consists of low frequency version of the image combined with high frequencies. By adding higher frequencies, the image resolution is doubled. Wavelets decompose the image into frequency subbands. LL subband corresponds to low frequencies and LH, HL and HH subbands to high frequencies of different orientations, see~\cite{wavelets_strang}. See Figure~\ref{fig_wavelet}. LH frequencies are cross-product of 1-D low frequencies in vertical direction and 1-D high frequencies in horizontal direction. HL are other way round. HH are high frequencies in both directions. Therefore the norm of the image increases as resolution is doubled. It increases primarily because there are more pixels in the image. Besides due to upsampling of existing low frequencies, it increases due to the addition of higher frequencies in the next octave but this increase follows a decreasing geometric sequence as per 1/$f$ process for LH and HL frequencies and 1/$f^2$ process for HH frequencies.

\begin{figure}[b]
    \centering
		\includegraphics[width=\textwidth]{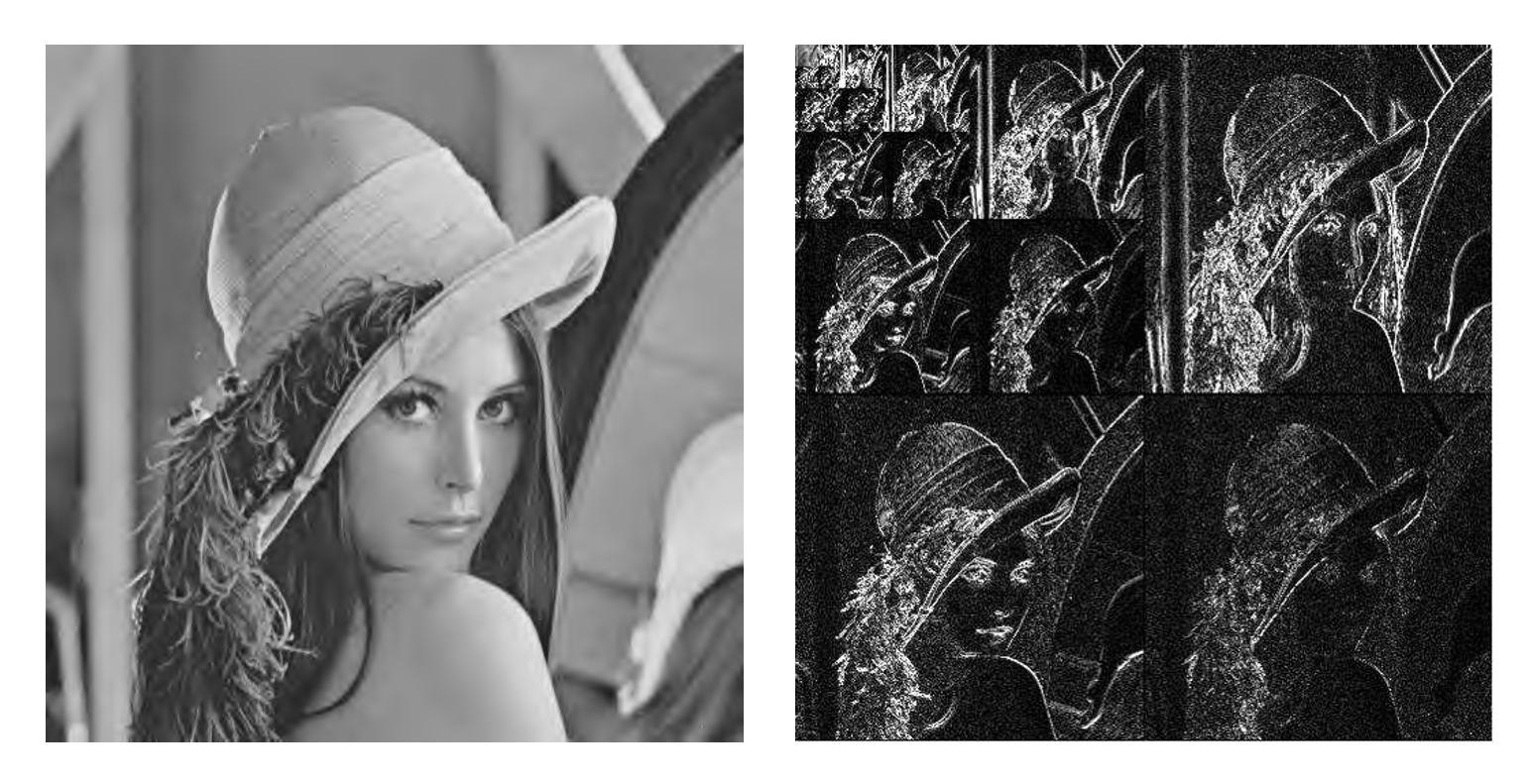}
    \caption{Multi-resolution orthonormal wavelet representation of an image.}
    \label{fig_wavelet}
\end{figure}

This allows us to compute the bound on radius of manifolds as the resolution increases. Underlying assumption is that we are working with high accuracy deep neural networks and trained and ground truth manifolds have approximately same radii, and in fact, can be even of the same order of magnitude for proof to work fine.

\begin{theorem}
\label{theorem_1f2}
For a semantic class of natural images, let $M_n$ be ground truth image manifold in space $\mathbb{R}^n$ for finite resolution $n = 2^m \times 2^m$, where $m \geq 0$, and let the images at different resolutions follow multi-resolution orthonormal wavelet representation which obeys 1/$f$ and 1/$f^2$ power spectrum processes. Let $M_n'$ be the trained manifold and assume ${\rm Radius}(M_n) \approx {\rm Radius}(M_n')$. Then,
\[{\rm Radius}(M_n) \thicksim {\sqrt n}\]
\[{\rm Radius}(M_n') \thicksim {\sqrt n}\]
\end{theorem}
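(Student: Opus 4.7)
The plan is to use Parseval's identity in the orthonormal wavelet basis to bound the norm of every $I \in M_n$, and then use the $1/f$ and $1/f^2$ power laws to show both that this norm is $O(\sqrt{n})$ (giving $R(M_n) = O(\sqrt{n})$) and that the natural variation within a semantic class produces two images at distance $\Omega(\sqrt{n})$ (giving the matching lower bound).

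First I would expand every $I \in \mathbb{R}^n$ with $n = 4^m$ in the $m$-level dyadic 2D wavelet basis,
\[
I = A_0 + \sum_{j=1}^{m} \sum_{s \in \{LH, HL, HH\}} D^{s}_j,
\]
where $A_0$ is the coarsest (single-scalar) approximation coefficient and $D^s_j$ is the detail block at scale $j$ in subband $s$, consisting of $4^{j-1}$ orthonormal coefficients. Parseval then gives
\[
\|I\|^2 = A_0^2 + \sum_{j=1}^{m} \sum_{s} \|D^s_j\|^2.
\]

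Second, I would convert the $1/f$ hypothesis on LH and HL and the $1/f^2$ hypothesis on HH into per-coefficient magnitude estimates at each scale $j$. Under the standard dictionary between a stationary $1/f^\beta$ power spectrum and orthonormal wavelet coefficients, each coefficient in a given subband has typical magnitude scaling as a fixed power of $2^j$. Combined with the $4^{j-1}$ coefficients per subband, this yields a closed bound for $\|D^s_j\|^2$. The key observation is that the finest-scale subbands dominate: they contain $\Theta(n)$ coefficients of bounded magnitude, while coarser subbands contribute either geometrically smaller (under $1/f$) or comparable per-pixel (under $1/f^2$) energy. Summing across the $m = \log_4 n$ scales telescopes to $\|I\|^2 = O(n)$ uniformly over the manifold, hence $R(M_n) = O(\sqrt{n})$.

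For the matching lower bound I would argue that any nontrivial semantic image class admits two images whose finest-scale wavelet coefficients differ in $\Theta(n)$ positions by at least a fixed amount, forcing $\|I - I'\| = \Omega(\sqrt{n})$ for some $I, I' \in M_n$. The theorem's assumption $R(M_n) \approx R(M_n')$ then transfers the scaling to the trained manifold. The hardest part of the plan is the second step: the rigorous dictionary between a continuous $1/f^\beta$ spectrum and discrete wavelet-coefficient energies is delicate, and one must ensure the per-octave contributions sum cleanly to $\Theta(n)$ without introducing a stray $\log n$ factor that would degrade the conclusion to $\sqrt{n \log n}$. A secondary subtlety is passing from a typical-image bound on $\|I\|$ to the manifold's radius, which is a supremum; this is handled on the upper side by a uniform wavelet-energy bound and on the lower side by exhibiting two concrete far-apart images.
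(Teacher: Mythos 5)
Your plan follows essentially the same route as the paper: expand in the orthonormal wavelet basis, invoke Parseval to equate image-space and coefficient-space energy, use the $1/f$ (LH, HL) and $1/f^2$ (HH) laws to make the per-octave subband contributions form a convergent geometric series, and conclude $\|I\|^2 = O(n)$ and hence $R(M_n) = O(\sqrt{n})$. The paper phrases this as tracking per-pixel power $l_{\rm LL}, h_{\rm LH}, h_{\rm HL}, h_{\rm HH}$ across a resolution doubling, which is just a different bookkeeping of the same geometric summation you describe; there is no substantive difference in the argument. Where you go beyond the paper is in noticing that the statement asserts $R(M_n) \thicksim \sqrt{n}$, an asymptotic equivalence rather than a one-sided $O(\sqrt{n})$, and that this requires a matching lower bound. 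The paper's proof establishes only the upper bound on $\|I\|$ and then silently promotes it to $\thicksim$; your step exhibiting two images whose finest-scale coefficients differ in $\Theta(n)$ positions is exactly what is needed to close that gap, and is the one genuinely new ingredient in your proposal. You are also right to flag the ``dictionary'' between a continuous $1/f^\beta$ spectrum and discrete wavelet energies as the delicate point: the paper simply postulates per-octave halving/quartering of subband energy rather than deriving it, so neither treatment is fully rigorous there, but your awareness of the potential $\log n$ leakage is a useful sanity check.
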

\begin{proof}
Consider a $2^k \times 2^k$ image with power (energy per pixel) $l_{\rm LL}$ in its low frequencies and $h_{\rm LH}$,  $h_{\rm HL}$ and $h_{\rm HH}$ in high frequencies. Consider its next higher resolution $2^{k+1} \times 2^{k+1}$. Power will be now $l_{\rm LL}$ in low frequencies and $h_{\rm LH}+h_{\rm LH}/2$, $h_{\rm HL}+h_{\rm HL}/2$ and $h_{\rm HH} + h_{\rm HH}/4$ in high frequencies. In limit, we will have
\[l_{\rm LL} + 2 h_{\rm LH} + 2 h_{\rm HL} + \frac{4}{3} h_{\rm HH}\]
Therefore energy will be bounded by
\[ n \; \left( l_{\rm LL} + 2 h_{\rm LH} + 2 h_{\rm HL} + \frac{4}{3} h_{\rm HH} \right)\]
Orthonormal property ensures that energies in image space and in frequency space are same. Therefore $L_2$-norm of any image will be bounded
\[ C {\sqrt n} \]
for some constant $C > 0$.
Therefore,
\[R(M_n) \thicksim {\sqrt n}\]
and since radius of $M_n'$ is approximately very close to that of $M_n$,
\[R(M_n') \thicksim {\sqrt n}\]
\end{proof}

For natural images, we can now improve the results on perturbation adversarial examples which we derived in Theorem~\ref{theorem_adv_negatives}.
\begin{theorem}
\label{theorem_adv_neg_2}
Consider the statements of Theorem~\ref{theorem_adv_negatives} and Theorem~\ref{theorem_1f2}. Then,
\[\lim_{n \to \infty} \mathop{\mathbb{E}}_{x} \left( \left\Vert P(x) \right\Vert \right) = 0.\]
\end{theorem}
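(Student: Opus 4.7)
The plan is to stitch together the quantitative expectation bound implicit in the proof of Theorem~\ref{theorem_adv_negatives} with the explicit radius estimate supplied by Theorem~\ref{theorem_1f2}. In one line: the former gives $\mathop{\mathbb{E}}_{x \in M_n'}\left(\left\Vert P(x) \right\Vert\right) \le R(M_n')/(n+1)$, while the latter says $R(M_n') \sim \sqrt{n}$; their product is $O(1/\sqrt n) \to 0$. No new geometric or probabilistic machinery is needed.

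To make this precise, fix one positive class and let $M_n'$ denote its trained manifold in $\mathbb{R}^n$. I would introduce the Euclidean $n$-ball $B_n$ of the same $n$-dimensional volume as $M_n'$. The isoperimetric comparison step used inside the proof of Theorem~\ref{theorem_adv_negatives}, combined with the exact calculation in Lemma~\ref{lemma_sphere}, yields
\[
\mathop{\mathbb{E}}_{x\in M_n'}\left(\left\Vert P(x)\right\Vert\right) \;\le\; \mathop{\mathbb{E}}_{x\in B_n}\left(\left\Vert P(x)\right\Vert\right) \;=\; \frac{R(B_n)}{n+1}.
\]
Using $R(B_n) \le R(M_n')$, the other inequality invoked in that same proof, this simplifies to the cleaner bound
\[
\mathop{\mathbb{E}}_{x\in M_n'}\left(\left\Vert P(x)\right\Vert\right) \;\le\; \frac{R(M_n')}{n+1}.
\]
Plugging in Theorem~\ref{theorem_1f2}, there is a constant $C>0$, uniform across the finitely many classes $C_1,\ldots,C_k$, with $R(M_n') \le C\sqrt{n}$, hence
\[
\mathop{\mathbb{E}}_{x\in M_n'}\left(\left\Vert P(x)\right\Vert\right) \;\le\; \frac{C\sqrt{n}}{n+1} \;=\; O\!\left(\frac{1}{\sqrt{n}}\right) \;\longrightarrow\; 0.
\]
The unconditional expectation over $x$ drawn from any mixture of the finitely many per-class trained manifolds is a convex combination of these per-class expectations, so the same $O(1/\sqrt n)$ decay persists and the limit vanishes.

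The main obstacle is essentially bookkeeping: one must keep both directions of the isoperimetric comparison straight, namely that the ball simultaneously maximizes the mean interior-to-surface distance and minimizes the diameter for a given volume, and one must verify that the $\sim$ in Theorem~\ref{theorem_1f2} is genuinely an upper bound $R(M_n') \le C\sqrt{n}$ with an $n$-independent $C$ (rather than a mere order-of-magnitude heuristic), and similarly that the approximation $R(M_n) \approx R(M_n')$ built into Theorem~\ref{theorem_1f2} holds with constants independent of $n$. Both points are already discharged inside the proofs of the preceding theorems, so Theorem~\ref{theorem_adv_neg_2} is essentially a direct assembly of those results, with the sharpening from ``expected relative perturbation vanishes'' to ``expected absolute perturbation vanishes'' coming entirely from the extra $\sqrt{n}$ radius bound for natural-image manifolds.
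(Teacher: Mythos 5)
Your proposal is correct and follows essentially the same route as the paper's proof: reduce to the $n$-ball case via the isoperimetric comparison already established in Theorem~\ref{theorem_adv_negatives}, then combine the exact formula $\mathbb{E}=R/(n+1)$ from Lemma~\ref{lemma_sphere} with the radius bound $R(M_n')\sim\sqrt n$ from Theorem~\ref{theorem_1f2}. You simply make explicit the quantitative $O(1/\sqrt n)$ chain and the convex-combination step over classes that the paper's one-line proof leaves implicit.
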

\begin{proof}
The proof is obvious for $n$-balls using Lemma~\ref{lemma_sphere} and Theorem~\ref{theorem_1f2}. For manifolds with arbitrary geometries, $n$-balls are worst case scenario using the same arguments as in Theorem~\ref{theorem_adv_negatives}. For arbitrary manifolds more volume is concentrated near the surface area compared to $n$-balls with same volumes as per the isoperimetric inequality, see~\cite{federer_isoperimetric_inequality}.
\end{proof}

\subsection{Unrecognizable Adversarial Examples}
Besides power spectrum properties we can apply the {\em Manifold Learning Hypothesis\/} to understand geometry of image manifolds. The Manifold Hypothesis states that most natural image classes at large enough resolution $n  = r \times c$ form manifolds which are embedded in a topological subspace with dimensionality $f \ll n$, see~\cite{manifold_ams_paper, manifold_book}. We can consider this subspace with much lower dimensionality as pose space where each point corresponds to a pose of the image as determined by some pose parameters. See Figure~\ref{fig_pose_space}.

\begin{figure}[t]
    \centering
		\includegraphics[width=\textwidth]{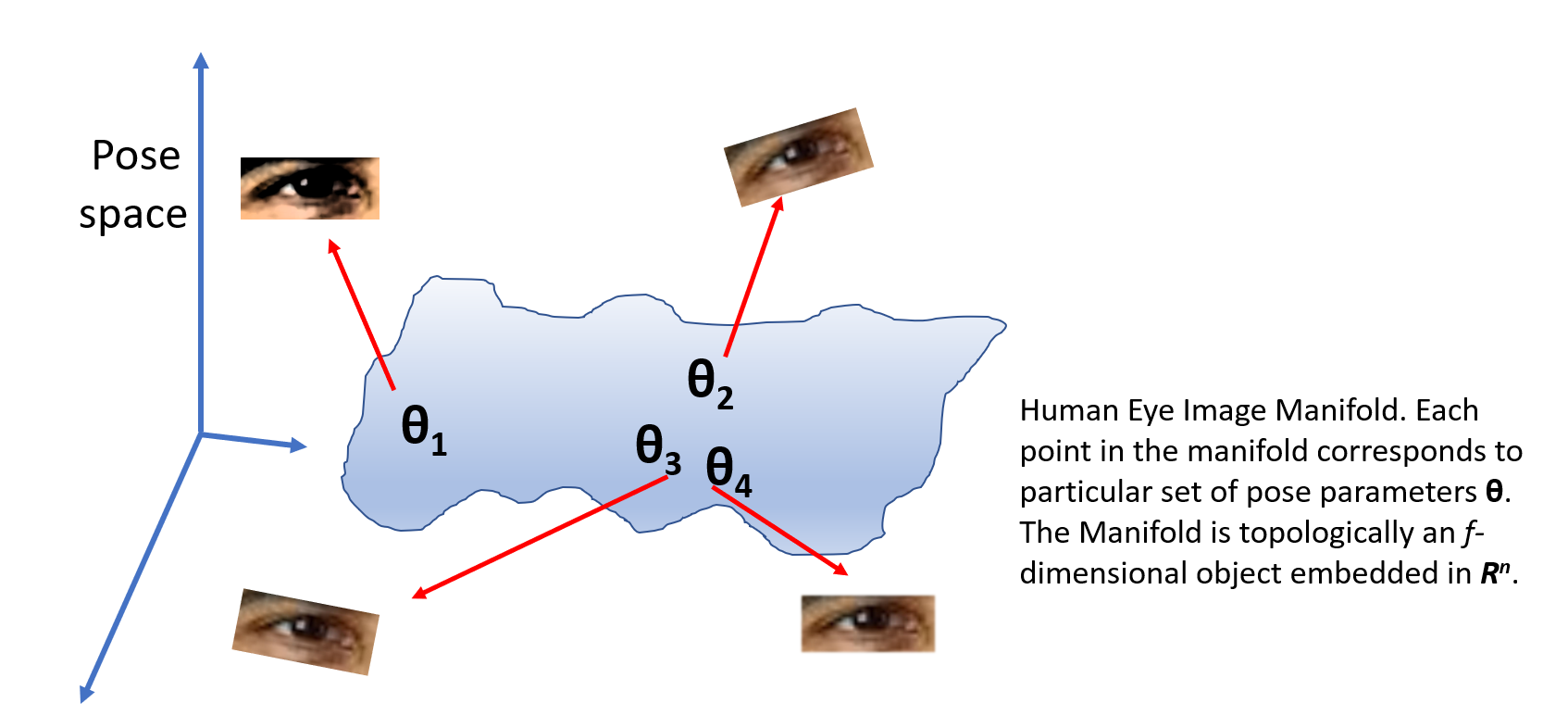}
    \caption{The Manifold Learning Hypothesis.}
    \label{fig_pose_space}
\end{figure}

In addition to adversarial examples discussed in Theorems~\ref{theorem_adv_negatives} and~\ref{theorem_adv_neg_2}, it has been shown that it is easy to generate artificial images, some of which could visually look like random noise and are unrecognizable by humans, for which the deep network returns a positive class with high probability. See Figure~\ref{fig_adv_examples}. One starts with some completely random image, which could be just pure noise, and performs gradient ascent on image pixels to maximize the output probability for some class. Soon the algorithm converges to a fake unrecognizable image with high enough output probability.

This is easy to understand if there is no negative class and there are only positive classes. Discriminative loss based training does not care what happens to negative space then. If there is negative class, then we need to understand it better. This phenomenon can be then understood using Manifold Hypothesis and Theorem~\ref{theorem_dedieu}. Manifolds of positive image classes occupy very small volume in high dimensional spaces compared with the complement corresponding to the negative class, as per the Manifold Hypothesis.

\begin{prop}
Consider 8-bit grayscale images. Assume the Manifold Learning Hypothesis is true. Let image manifolds be $f$-dimensional topological objects in the $n$-dimensional space $[0,255]^n$.  Let manifolds have finite $f$-dimensional volume. Then as $n$ becomes arbitrarily large, the volume of positive image manifolds becomes arbitrarily small compared to that of the surrounding negative space.
\end{prop}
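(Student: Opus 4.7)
The plan is to reduce the claim to the observation that a positive-class manifold, being $f$-dimensional with $f \ll n$, occupies negligible $n$-dimensional volume inside $[0,255]^n$. In the strictest reading the proposition is immediate: any subset of $\mathbb{R}^n$ with finite $f$-dimensional Hausdorff measure and $f < n$ has $n$-dimensional Lebesgue measure zero, so the ratio of manifold volume to hypercube volume is already $0$ for every $n > f$. The more informative, and I expect intended, formulation thickens the manifold by a small tolerance $\epsilon > 0$ and compares the $n$-dimensional volume of the resulting tube $M_\epsilon = \{x \in \mathbb{R}^n : d(x,M) \leq \epsilon\}$ to the ambient volume $255^n$.

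First I would apply a tube-volume estimate. Since $M$ has finite $f$-dimensional volume $V_f$ and the normal bundle has rank $n-f$, standard tube formulas (Weyl in the smooth case, or a Vitali covering argument for compact $f$-rectifiable sets) give
\[
\operatorname{Vol}_n(M_\epsilon) \;\leq\; V_f \cdot v_{n-f} \cdot \epsilon^{n-f} \cdot (1 + o(1)),
\]
where $v_k = \pi^{k/2}/\Gamma(k/2+1)$ is the volume of the unit Euclidean $k$-ball. Dividing by the ambient volume of the hypercube yields
\[
\frac{\operatorname{Vol}_n(M_\epsilon)}{255^n} \;\leq\; \frac{V_f \cdot v_{n-f} \cdot \epsilon^{n-f}}{255^n} \cdot (1+o(1)).
\]

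Next I would bound the right-hand side asymptotically. By Stirling's formula $v_k$ decays superexponentially in $k$, and $(\epsilon/255)^{n-f}$ decays exponentially in $n-f$ for any fixed $\epsilon < 255$. Provided $V_f$ is bounded (or at worst grows polynomially in $n$) and $f/n \to 0$, which is exactly the $f \ll n$ regime of the Manifold Hypothesis, the ratio tends to $0$ as $n \to \infty$. Taking the union over the $k$ positive classes preserves this conclusion, so the complement, namely the negative region, retains essentially all of the measure of $[0,255]^n$.

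The main obstacle is interpretational rather than technical: one must clarify what ``volume of positive image manifolds'' means, because for $f < n$ the native $n$-dimensional Lebesgue measure of $M$ is identically zero, which makes the claim vacuous on its face. Once the $\epsilon$-tube formulation is adopted, the remainder is a one-line comparison of a superexponentially small numerator against an exponentially large denominator. A subsidiary delicacy is how $V_f$ depends on $n$: the hypothesis ``finite $f$-dimensional volume'' should be read as $V_f$ growing at most polynomially in $n$, since overly rapid growth of $V_f$ with $n$ could in principle defeat the geometric decay; in the image-manifold setting this mild growth condition can be justified from the radius bound ${\rm Radius}(M_n) \sim \sqrt{n}$ established in Theorem~\ref{theorem_1f2}.
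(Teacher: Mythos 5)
Your proof is correct, but it takes a genuinely different route from the paper's own argument. The paper sidesteps the measure-zero vacuousness you rightly flag by working entirely in the discrete domain: it treats $[0,255]^n$ as the finite set $\{0,\ldots,255\}^n$, interprets ``volume'' as a count of discrete images, and compares how that count scales when the resolution is quadrupled from $n$ pixels to $4n$. The universe grows by a factor $k^n$ with $k = 256^3$ (three new pixels per original pixel), while the positive class grows only by a factor $t^n$ with $t < k$, because correlation constraints in natural images restrict the admissible values of the new pixels; hence $c_n/u_n$ decays geometrically and no continuum measure theory is needed. Your argument instead stays in the continuum, notes that an $f$-rectifiable set with $f < n$ has zero $n$-dimensional Lebesgue measure, and then makes the statement non-vacuous by thickening $M$ to an $\epsilon$-tube and invoking a Weyl-type tube volume bound, comparing the superexponentially small $v_{n-f}\,\epsilon^{n-f} V_f$ against $255^n$. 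Your approach buys a quantitative continuum rate and surfaces the regularity hypotheses (rectifiability, control of how $V_f$ can grow with $n$) that the statement implicitly requires, which the paper's counting argument simply does not touch; the paper's approach buys elementarity, avoids the zero-measure awkwardness entirely, and ties directly into the $1/f^\beta$ spectral-correlation model used elsewhere in the paper, at the cost of having to posit a fixed bound $t < k$. Both routes establish the proposition, and your observation that the literal continuum reading is vacuous is exactly the reason the paper switches to counting.
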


To prove the above proposition for a simple case, consider a resolution $n = r \times c$ for 8-bit images. Let $u_n$ be the number of all possible images in the universe $U = [0,255]^n$ and let $c_n$ be the number of images in a particular semantic class $C \subset U$. Volume computation reduces to counting number of images in discrete domain. Consider next resolution $4n = 2r \times 2c$. For each pixel of each image in $U$ at resolution $n$, we have 3 new pixels (in general, 3 degrees of freedom) as it is upsampled to $2\times 2$ region. There are $k = 256^3$ possible choices of values of these pixels. Assume that for each pixel for any image in $C$, there are $t < k$ choices for these 3 pixels due to semantic constraints which forces neighboring pixels to have strong correlation with each other in natural images.
Then,
\[ u_{4n} = u_n \; k^n\]
\[ c_{4n} = c_n \; t^n\]
and $u_n$ will become arbitrarily large compared with $c_n$ as $n$ increases. In fact, because of 1/$f^\beta$ process, $t$ will become smaller with $n$, though for the proof we just need $t < k$.

An object with finite $f$-dimensional measure will have zero $n$-dimensional measure if $f < n$. And as $n$ tends to $\infty$ it is not possible to have enough training data for the negative class due to this curse of dimensionality. In order to get enough training data for negative class, we will have to sample points from a volume just outside the surface of the manifold. This volume is $n$-dimensional space surrounding an $f$-dimensional object. Even if manifold was $n$-dimensional, since most of the volume is concentrated on the surface, this volume for negative samples will be very large, see Theorem~\ref{theorem_halo} in later section. Therefore,
\[{\rm Negative\; Training\; Data\; Size} \ggg {\rm\; Positive\; Training\; Data\; Size}\]

Consider the case when we are not able to completely surround the manifold by negative training samples, and there are some gaps. Training of deep networks does not guarantee anything in large spaces which are not covered by the training data. Therefore, the training may not choose to cover this gap under discriminative loss function depending upon the local geometry of the manifold. Fake unrecognizable adversarial examples will be then found in the negative space through this gap. See Figure~\ref{fig_neg_examples}. In the figure, we see why under discriminative loss, the deep network may not invest in neurons to close such gaps, thereby creating fake adversarial space.

\begin{figure}[t]
    \centering
		\includegraphics[width=\textwidth]{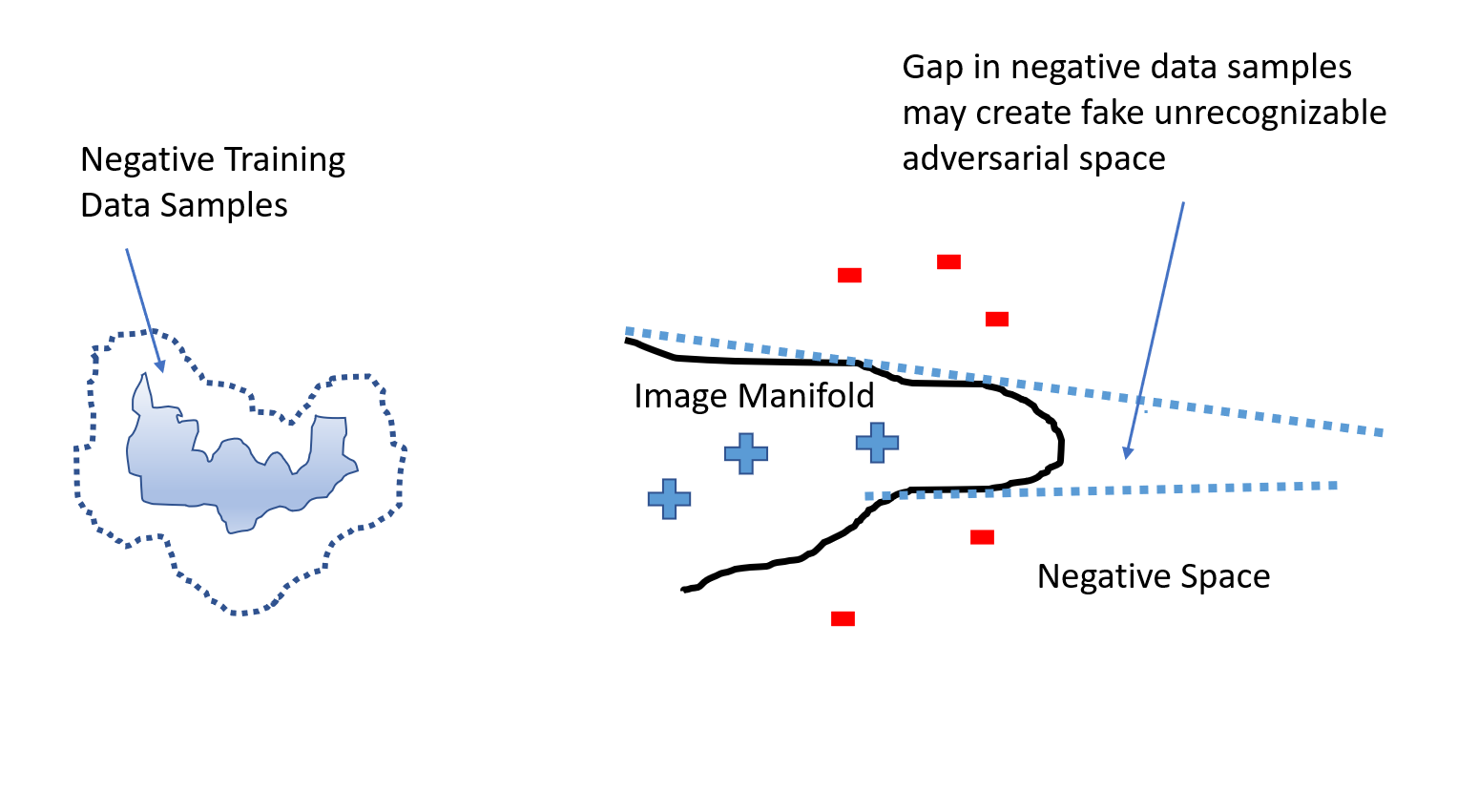}
    \caption{{\em Left}, image manifold is $f$-dimensional object with finite $f$-dimensional volume embedded in an $n$-dimensional space which implies that need for training data for negative samples surrounding the surface of the manifold becomes arbitrarily large as $n$ increases. {\em Right}, any gap in the surrounding negative training data may lead to fake adversarial images depending upon the geometry of the surface. In addition to the two hyperplanes shown, with more training data there should have been another one closing the gap.}
    \label{fig_neg_examples}
\end{figure}

At the same time, number of critical points is increasing as underlying Image Polynomials become large, see Theorem~\ref{theorem_dedieu}. Goal of training is to approximate the characteristic functions of the image manifolds (probability value 1 inside a image manifold and 0 elsewhere). The final Image Polynomials for different classes approximate these functions under discriminative loss optimization. Note that in image polynomials, variables are image pixels and the deep network parameters contribute to coefficients of the polynomials. Therefore it becomes easier to find these random looking adversarial samples from the negative space, which has much larger volume, using gradient ascent algorithm. See Figure~\ref{fig_fake_examples}.

\begin{prop}
Fake unrecognizable adversarial examples correspond to critical points of Image Polynomials in increasingly large negative space and which become more and more numerous with increasing resolution of images and increasing size of networks trained under discriminative loss function, as given by Theorem~\ref{theorem_dedieu} and Theorem~\ref{theorem_opti_poly}.
\end{prop}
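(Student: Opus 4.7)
The plan is to assemble the proposition from three ingredients already established: the polynomial approximation of the network viewed as a function of the input (Theorem~\ref{theorem_opti_poly}), the critical-point counts for random polynomials (Theorem~\ref{theorem_dedieu}), and the volume disparity between positive image manifolds and the surrounding negative region that follows from the Manifold Hypothesis argument of the preceding subsection.

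First I would make the correspondence between gradient-ascent targets and critical points precise. Freeze the trained network parameters and, for a positive class $C_i$, view the class score $A_i(x)$ as a function of the image $x \in [0,255]^n$. By Theorem~\ref{theorem_opti_poly} this function is uniformly approximated by an Image Polynomial $\Pi_i(x)$ whose variables are the $n$ pixel intensities and whose degree $d$ grows with the depth of the network, since each layer contributes a multiplicative composition of Stone--Weierstrass approximants of ReLU and max. The fake-image construction is exactly gradient ascent on $\Pi_i$ starting from an arbitrary seed; any image at which such ascent halts must be a zero of $\nabla \Pi_i$, i.e.\ a critical point of $\Pi_i$, which pins down the sense in which fake adversarial examples ``correspond to'' critical points.

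Next, I would import the counts from Theorem~\ref{theorem_dedieu}. Treating $\Pi_i$ as a random polynomial of degree $d$ in $n$ variables, the expected number of critical points is at most $\sqrt{2}(d-1)^{(n+1)/2}$, which grows super-exponentially both in the image resolution $n$ and in the network-depth-controlled degree $d$. The corollary further shows that local minima (and by sign-symmetry of the Hessian spectrum, local maxima) are exponentially suppressed by the factor $\exp(-n^2 \ln 3 / 4)$, so the overwhelming majority of the $(d-1)^{(n+1)/2}$ critical points are saddles. Since saddle points also stall gradient ascent in practice and since the deep network assigns them class scores determined by $\Pi_i$ with no training pressure to keep those scores low, each of them is a potential terminus of the fake-image generator. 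This delivers the ``more and more numerous'' clause in the stated $n$ and $d$ scaling.

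The last and most delicate step is to localise these critical points in the negative region. Here I would invoke the volume bound from the preceding subsection: because $u_{4n}/c_{4n} = (k/t)^n \to \infty$ under the Manifold Hypothesis, the positive manifold $P_i$ occupies a vanishing fraction of $[0,255]^n$, so a uniform-density argument places almost all critical points of $\Pi_i$ outside $P_i$. The principal obstacle is justifying the uniform-density assumption, since $\Pi_i$ is not literally a generic random polynomial---its coefficients are shaped by training. I would handle this by observing that a discriminative loss only constrains $\Pi_i$ on the (essentially lower-dimensional) support of the training data; in the vast untrained complement the polynomial is effectively unpenalised and behaves as a generic polynomial of the same degree, so Theorem~\ref{theorem_dedieu}'s counts transfer to that region. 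Combining this with the previous step yields exactly the statement of the proposition.
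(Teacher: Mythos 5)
Your proposal follows the same route as the paper: the paper also does not give a formal proof but instead derives the proposition from the surrounding discussion, which invokes exactly the three ingredients you use --- Theorem~\ref{theorem_opti_poly} to pass from the frozen network to an Image Polynomial in the pixel variables, Theorem~\ref{theorem_dedieu} to count critical points of that polynomial, and the Manifold Hypothesis volume argument ($u_{4n}/c_{4n} = (k/t)^n \to \infty$) to place those critical points overwhelmingly in the negative region. Your reconstruction is an elaboration of the paper's informal justification rather than a different proof.

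Where you go beyond the paper is in the final paragraph, where you flag the tacit assumption that a \emph{trained} Image Polynomial can be treated as a \emph{random} one of the same degree, and supply a heuristic (discriminative loss only constrains the polynomial near training support, so the complement is effectively generic). The paper applies Theorem~\ref{theorem_dedieu} to Image Polynomials without acknowledging this gap, so your treatment is the more careful of the two. Two small caveats worth noting: (i) the claim that gradient ascent necessarily halts at a zero of $\nabla \Pi_i$ is an idealisation --- in practice the search stops when a confidence threshold is crossed or when a box constraint of $[0,255]^n$ becomes active, so the ``correspondence'' to critical points is heuristic in both your version and the paper's; and (ii) the relevant fake examples are points where the class score is \emph{high}, so what really matters is the preimage $\{\Pi_i > \tau\}$ in the negative region, not every saddle of $\Pi_i$; the saddle-dominance of Theorem~\ref{theorem_dedieu} supports the abundance claim but does not by itself guarantee high scores at those saddles. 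Both caveats are inherited from the paper's own informal treatment rather than introduced by you.
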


\begin{figure}[b]
    \centering
		\includegraphics[width=\textwidth]{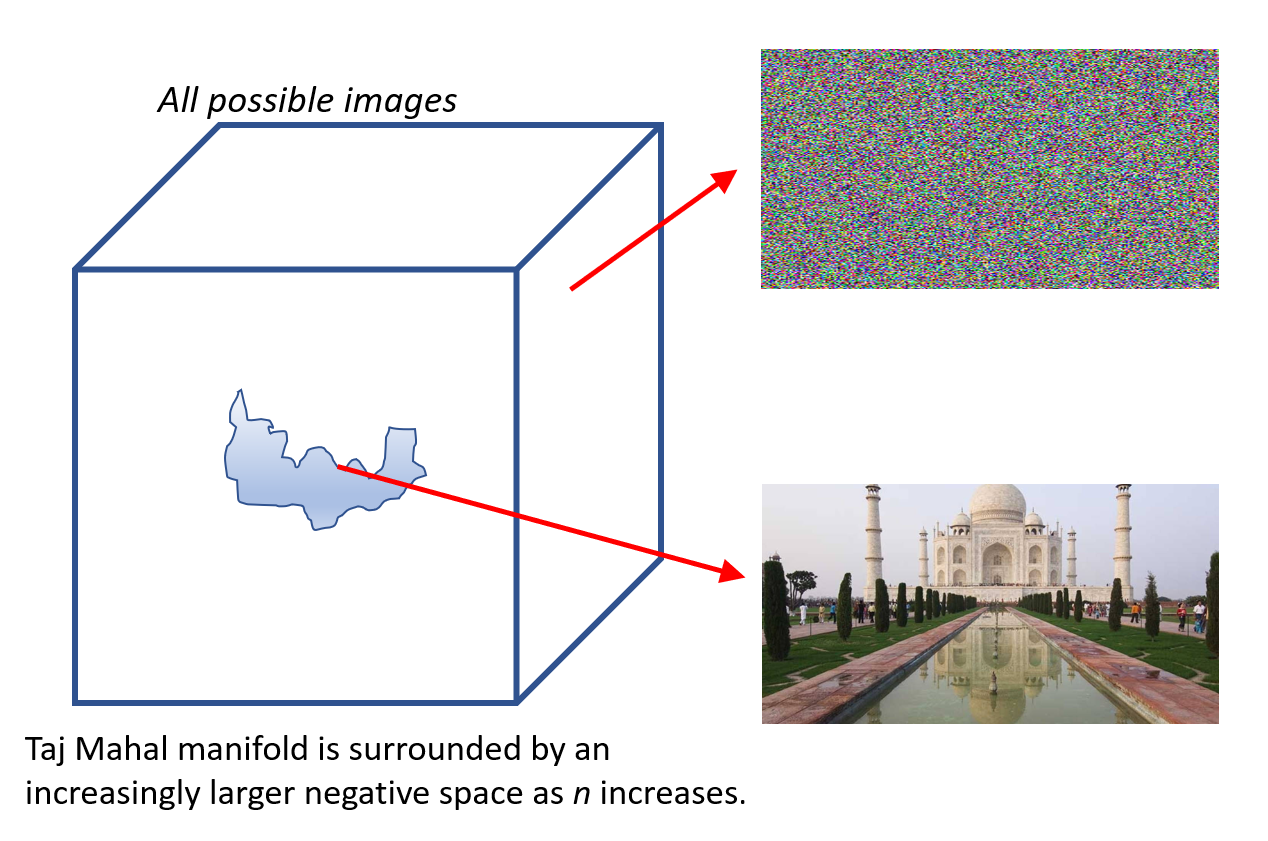}
    \caption{The Manifold Learning Hypothesis indicates that volume occupied by image manifolds is much smaller than surrounding complement negative space. The negative space abounds with critical points of Image Polynomials.}
    \label{fig_fake_examples}
\end{figure}

Another application of the Manifold Hypothesis is in providing the following justification for deep learning. In deep learning, feature engineering is done automatically. A deep neural network can be understood as a feature extractor followed by a classifier. The first part transforms the input of dimensionality $n$ to a feature of dimensionality $f$, which ideally should correspond to subspace dimensionality of the image manifold, and then this feature extractor is used for subsequent classification (or regression) tasks.

\section{Eliminating Adversarial Examples and Discussion}
\label{section_future}
How can we eliminate adversarial examples? Is it curse of dimensionality that image classes will always have border and every sample eventually happens to be close to the border in high dimensions? In this section, we dive into problem which also suggests way for future improvements of deep learning.

\subsection{Training Data}
A natural idea to mitigate the problem of adversarial examples is to include them in the training data. However, no matter how much you include these cases, you can not eliminate the problem due to Theorems~\ref{theorem_adv_negatives} and~\ref{theorem_adv_neg_2}.

As long as there is surface, the problem persists. One can question if the concept of hard borderline for ground truth manifolds is robust. Take an image of a cat and start modifying it. When does it stop being a cat? It could very well be subjective opinion. One can consider dilating the manifold by including all those images which are visually somewhat close to cat images and human judgment assigns them a probability less than 1 of being a cat. This creates a halo around the manifold. Hopefully this will mitigate the problem of adversarial examples when we perturb only those images for which ground truth probability is 1. If the original ground truth manifold is a subset of trained manifold obtained by training on the expanded manifold then there won't be any adversarial examples as per original ground truth.

Let's quantify the need for additional training data for this purpose, some of which can be created using {\em data augmentation\/} techniques. Define the {\em dilation\/} of a ground truth manifold $M$ by $n$-ball of radius $r$ to be the set
\[M \oplus B(0,r) =  \{x \in B(y,r) | y \in M\}\]
where $B(y,r)$ is $n$-ball centered at $y$ of radius $r$.

What should be the value of dilation $r$? That could be dependent on the image class and its radius $R$. We will consider two cases. The first case is optimistic and we use the bound $R/n$ from Lemma~\ref{lemma_sphere}. Second case is pessimistic where we estimate $r$ to be a fixed fraction of $R$.

\begin{theorem}
\label{theorem_halo}
Let $M$ be $n$-ball of radius $R$. Dilate $M$ by an $n$-ball of radius $r$. Let $\alpha > 0$. Then,
\begin{enumerate}
\item
If $r = \alpha \frac{R}{n}$, then
\[ \mathop{\lim}_{n\to\infty} \frac{{\rm Vol} (M \oplus B(0,r))} {{\rm Vol} (M)} = e^\alpha \]
\item
If $r = \alpha R$, then
\[ \mathop{\lim}_{n\to\infty} \frac{{\rm Vol} (M \oplus B(0,r))} {{\rm Vol} (M)} = \mathop{\lim}_{n\to\infty} (1 + \alpha)^n = \infty\]
\end{enumerate}
\end{theorem}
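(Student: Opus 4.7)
The plan is to reduce the dilation to a simple ball computation and then evaluate the two limits. First I would observe that since $M = B(0,R)$ is itself an $n$-ball, the Minkowski-type dilation $M \oplus B(0,r)$ is just the $n$-ball of radius $R+r$: indeed, any $x \in B(y,r)$ with $y \in B(0,R)$ satisfies $\|x\| \le \|y\| + r \le R+r$, and conversely any $x$ with $\|x\| \le R+r$ can be written as $x = y + (x-y)$ where $y = \min(1, R/\|x\|)\,x$ lies in $M$ and $\|x-y\| \le r$. This step is purely geometric and does not invoke any of the earlier machinery.

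Once the dilation is identified as a ball of radius $R+r$, the volume ratio collapses, since $n$-ball volumes scale as the $n$th power of the radius:
\[
\frac{\mathrm{Vol}(M \oplus B(0,r))}{\mathrm{Vol}(M)} = \frac{(R+r)^n}{R^n} = \left(1 + \frac{r}{R}\right)^n.
\]
The two cases of the theorem now become two substitutions into this single formula.

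For case (1), plugging in $r = \alpha R/n$ gives $\left(1 + \alpha/n\right)^n$, whose limit as $n \to \infty$ is the classical $e^\alpha$. For case (2), plugging in $r = \alpha R$ gives $(1+\alpha)^n$, which diverges to $\infty$ since $\alpha > 0$. Both conclusions then follow directly.

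There is essentially no obstacle here: the only mildly non-trivial point is justifying the identity $B(0,R) \oplus B(0,r) = B(0, R+r)$, which is a standard consequence of the triangle inequality and the convexity of balls. Everything else is a one-line asymptotic. The value of the statement is conceptual rather than technical — it quantifies, via the two regimes of $r$, how sharply the dilation cost depends on whether the halo radius scales like $R/n$ or like a fixed fraction of $R$, and this is precisely what is used in the subsequent discussion of data augmentation.
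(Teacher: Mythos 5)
Your proof is correct and follows essentially the same route as the paper: identify the dilated set as the ball $B(0,R+r)$, compute the volume ratio $(1+r/R)^n$, and take the two limits. The paper expresses the volume ratio via the integral $\int_0^{R+r} S(n,r')\,dr' \big/ \int_0^R S(n,r')\,dr'$ rather than invoking the $R^n$ scaling directly, but this is the same computation in different clothing; your explicit justification of $B(0,R)\oplus B(0,r)=B(0,R+r)$ is a detail the paper leaves implicit.
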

\begin{proof}
The proof follows from the computation of volumes using integration as in Lemma~\ref{lemma_sphere}.
For first part, evaluate
\[\frac{\int_{0}^{R + \alpha \frac{R}{n}} S(n,r') dr'} {\int_{0}^{R} S(n,r') dr'} \]
which is
\[\left(1 + \frac{\alpha}{n}\right)^n\]
\end{proof}

Thus in worst case, we may need much more training data which may be practically infeasible.

The concept of borderline halo makes us consider new definitions of what ground truth is and what test error is. We can define {\em dilated test error\/} of deep network to be 0 if
\[M \subseteq M' \subseteq M \oplus B(0,r)\]
where $M'$ is trained manifold. And by defining $M \oplus B(0,r)$ where $r = \alpha R$ as {\em dilated ground truth\/}, we can develop intuition behind the statement that almost everything is at surface. What is really true then is that almost everything is an augmented and borderline image.

The above theorem can be also applied to the case of fake unrecognizable adversarial images, as it quantifies the need for training data for negative samples which surrounds the manifold in a similar fashion.

In high dimensions, any practical size training data will be sparse. And with more diversity in images due to perturbations, we will need larger capacity networks. This limits the practical applicability of this solution which attempts to create a distance from the surface of the manifold. The simplistic brute force manifold dilation is a subjective reinterpretation of ground truth and the need for larger training data is independent of that. One can consider more sophisticated methods to get additional training data but one has to ask if the neural networks will be able to use it effectively. Training data always helps but the roots of the problem may be deeper than just lack of enough training data, see Figure~\ref{fig_data_aug}.

\begin{figure}[b]
    \centering
		\includegraphics[width=\textwidth]{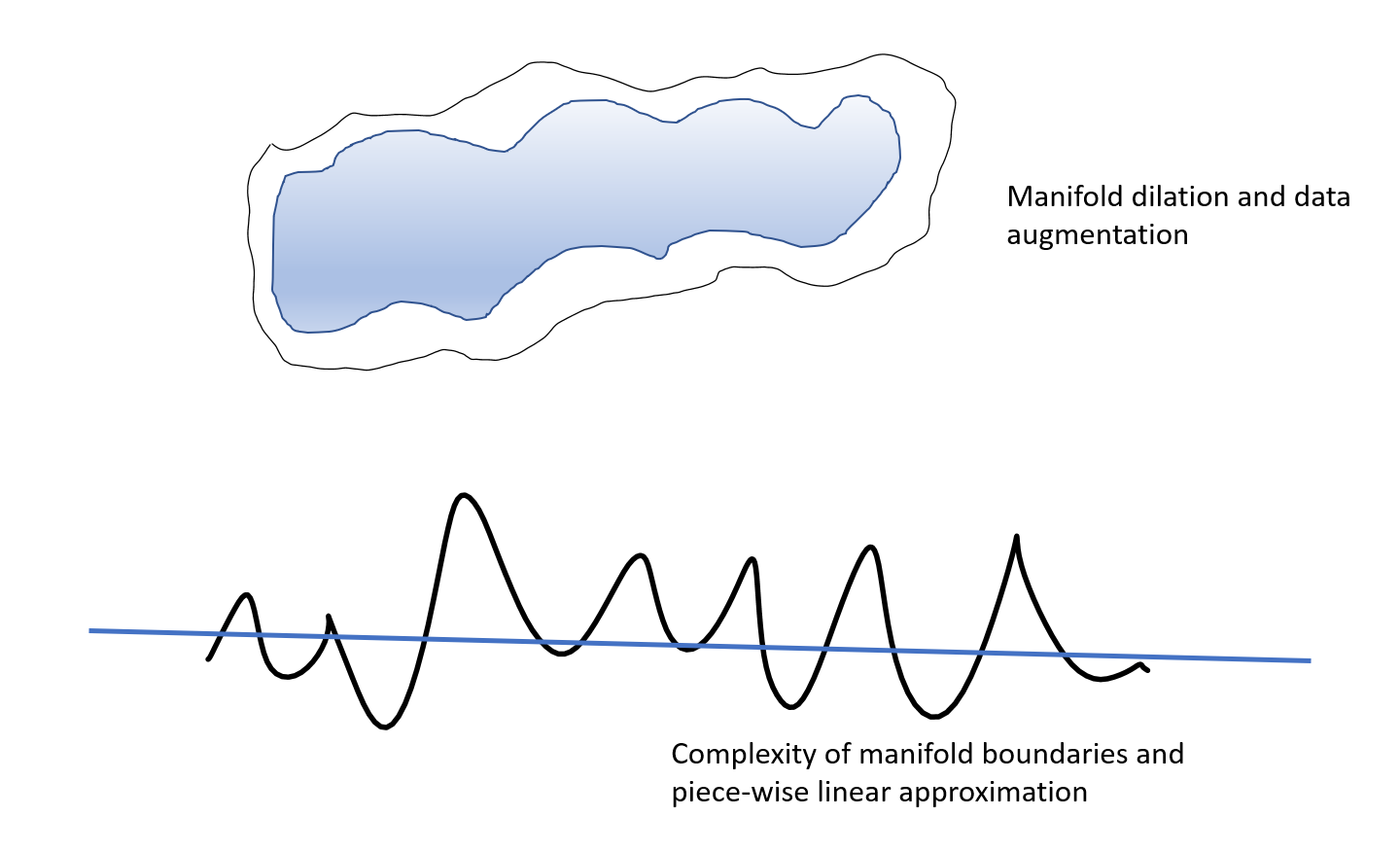}
    \caption{{\em Top}, creating a halo around the surface of ground truth image manifold through data augmentation. {\em Bottom}, the surface of manifolds may be too complex and highly non-linear as evidenced by Adversarial Examples which show that deep networks, despite their better performance over other machine learning techniques, suffer from having such images on the incorrect side of their decision hyperplanes.}
    \label{fig_data_aug}
\end{figure}

\subsection{Surfaces of Image Manifolds}
So far, assumption was made that we are interested in ever increasing dimensionality $n$ as in Theorems~\ref{theorem_adv_negatives} and~\ref{theorem_adv_neg_2}. But the Manifold Hypothesis indicates that for an image class, there exists a finite topological dimension $f$ of underlying manifold. So assumption of arbitrarily large $n$ is wrong and therefore in principle we should be able to eliminate adversarial examples, as we are not interested in the limit in theorems, but in the perturbation bound for finite case.

Even though $f$ may be finite, the image manifolds have complex geometries, see Figure~\ref{fig_data_aug}. Though $f \ll n$, the $f$-dimensional image manifold can still twist and turn around in the whole $n$-dimensional space, very much like how a fractal curve does. A fractal curve can turn around so much that it can result into an everywhere continuous curve which is nowhere differentiable or even fill up the entire space. We will build visual intuition behind the complexity of manifold surfaces.

In fact, objects with sharp boundaries belong to manifolds which are non-differentiable everywhere, see~\cite{wakin_manifold}.

To build further intuition behind the complexity of manifold surfaces, consider the multiresolution family of manifolds for an image class
$\{M_n | n = 4^k\}$
and how one obtains $M_{4n}$ from $M_{n}$ by image zooming. For an image $x$ in $M_n$, the corresponding images in $M_{4n}$ are obtained by following stochastic 1/$f^\beta$ process in frequency domain and self-similarity in space domain. If $x$ is close to the surface in $M_n$, we can expect that this upsampling will result in roughness in surface of $M_{4n}$. As we add details to blades of grass, fur of cat or outline of clouds, while obeying the characteristics of natural images, those properties will manifest in terms of mathematical properties of the surfaces of the manifolds. See Figure~\ref{fig_fractal_examples}. Note that here we are zooming into manifolds in a dimensional sense where the underlying dimension of spaces becomes bigger from $n$ to $4n$. This is different from the standard 2-D image zoom in which space remains $R^2$.

\begin{figure}[t]
    \centering
		\includegraphics[width=\textwidth]{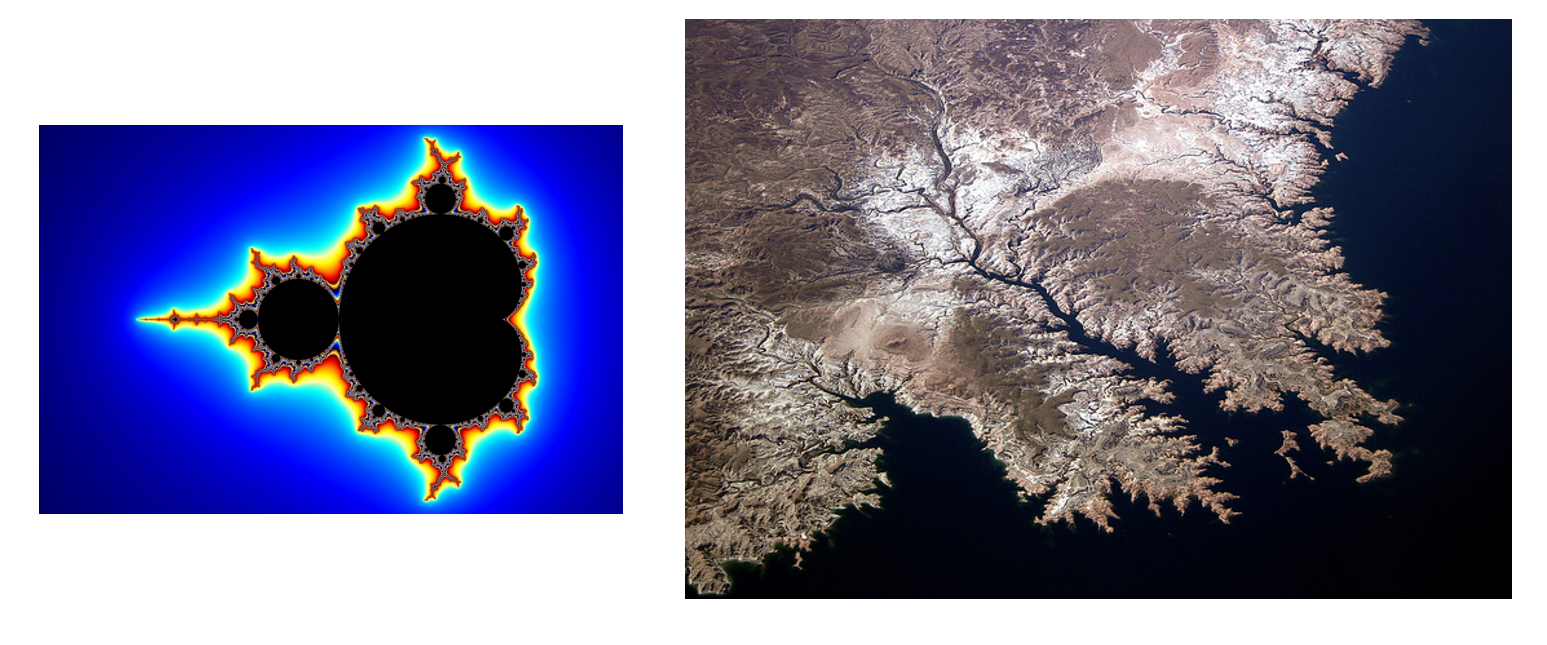}
    \caption{{\em Left}, Mandelbrot set with highly complex boundary. {\em Right}, self-similarity in natural images such as coastlines along with stochastic 1/$f^\beta$ process is likely to lead to rough surfaces of image manifolds.}
    \label{fig_fractal_examples}
\end{figure}

Finally, fix the dimension and make the object change its pose. If the object is complex with several degrees of freedom in its transformations, that will reveal in twists and turns of the manifold.

Now we will use some mathematical concepts to make the above intuition rigorous.

\begin{figure}[b]
    \centering
		\includegraphics[width=\textwidth]{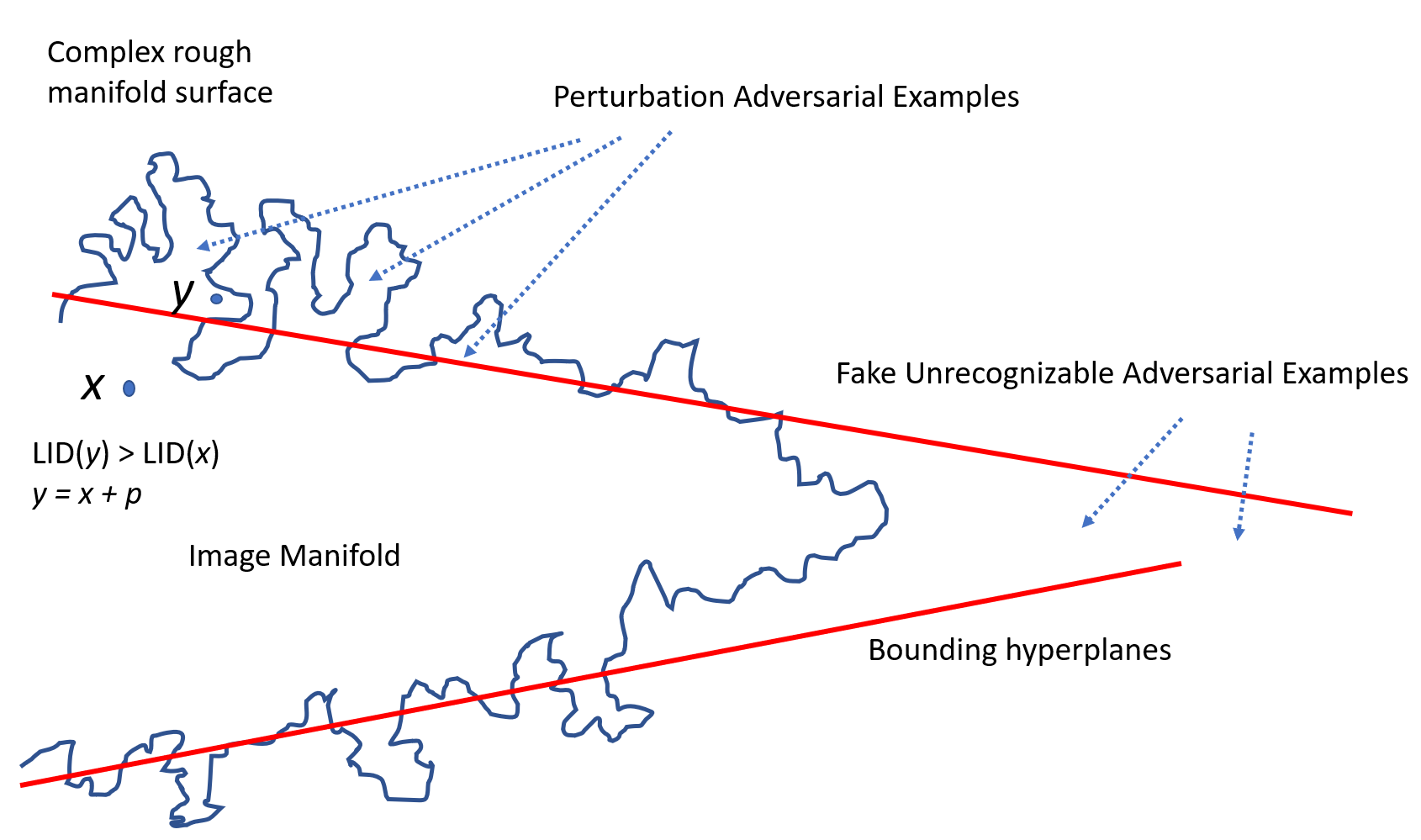}
    \caption{Illustration of how complexity of surfaces of image manifolds can lead to adversarial examples. Space filling ability of the manifold surface as empirically estimated by local intrinsic dimensionality (LID) seems to be very high, see~\cite{lid_paper_deep}. LID is conceptually related to fractal dimension as explained in the text. Theorems~\ref{theorem_adv_negatives} and~\ref{theorem_adv_neg_2} indicate that almost everything is close to surface and empirical findings in~\cite{lid_paper_deep} indicate that surface is topologically quite different from interior. It is likely that LID goes through fractional values from $x$ to $y$. Inability to carve out this surface will lead to adversarial examples.}
    \label{fig_rough_surface}
\end{figure}

{\em Minkowski \mbox{-} Bouligand dimension\/} of a set $X$ is defined as
\[ {\rm dim}_{\rm M}(X) = \mathop{\lim}_{\epsilon \to 0} \frac{\ln{N(X,\epsilon)}}{\ln{1/\epsilon}} \]
by computing number of number of boxes $N(X,\epsilon)$ of side length $\epsilon$ to cover $X$ and can be viewed as a way to compute fractal dimension of $X$.
This can be interpreted as
\[ N(X,\epsilon) \propto \epsilon^{-D} \]
where $D$ is fractal dimension. Consider an $n$-ball $B$ of radius $R$ and let volume of the set $X$ contained inside B be $V$ and let $\epsilon = 1/R$. Then, we can interpret the above as
\[V \propto R^D\]
where $D$ is expansion dimension. Therefore, using two different radii $R_1$ and $R_2$, one can compute $D$ as,
\[ \frac{V_2}{V_1} = \left( \frac{R_2}{R_1} \right)^D \Rightarrow D = \frac{\ln{V_2/V_1}}{\ln{R_2}{R_1}} \]
One can generalize it to probability distributions and make it local at a point $x$, see~\cite{lid_paper_amsaleg, lid_paper_deep}. One considers probability distribution of distances of points from $x$ in local neighborhood of $x$ and takes the probability mass as analogous to volume. Then $D$ can be used as measure of {\em Local Intrinsic Dimensionality} (LID), see~\cite{lid_paper_amsaleg, lid_paper_deep}.

If the surface of the image manifolds is more complex than interior, then we will expect LID to be higher on the surface. It has bee empirically determined in~\cite{lid_paper_deep} that LID of points near surface where adversarial examples exist is significantly higher than for points inside the manifold
\[ {\rm LID} (y) > {\rm LID}(x) \]
where $y = x + p$, for some perturbation $p$. This empirically shows that statistics of natural images leads to complexity of surfaces of image manifolds, see Figure~\ref{fig_rough_surface}. If the dimension also happens to be fractional, then it will indicate roughness in fractal sense. Significant increase in dimension means geometrically some fundamental changes are occurring on the surface. As we move from $x$ to $y$, the manifold starts topologically filling up the space. It is very likely that LID goes through fractional values in this transition which will make one generalize manifolds to arbitrary fractal sets. Even if LID was always integral, surfaces are geometrically complex. Exact characterization of surfaces of image manifolds remains open problem.

Theorems~\ref{theorem_adv_negatives} and~\ref{theorem_adv_neg_2} indicate that we have to worry about surfaces of manifolds as almost everything is close to surface and therefore the root cause of adversarial examples seems to be the complexity of surfaces. Theorems~\ref{theorem_adv_negatives} and~\ref{theorem_adv_neg_2} in this paper along with empirical results in~\cite{lid_paper_deep} explain why adversarial examples exist if the complex surfaces can not be carved out by deep neural networks. We state our proposition now for future work in deep learning, which is self-evident.
\begin{prop}
A machine learning system which can carve out complex non-differentiable manifolds in high-dimensional spaces approximating very closely the ground truth manifolds will rarely suffer from adversarial examples. If trained manifold is identical to ground truth manifold, then there will be no adversarial examples.
\end{prop}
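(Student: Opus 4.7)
\medskip

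\noindent\textbf{Proof proposal.} The plan is to reduce the proposition to a statement about the symmetric difference $M \triangle M' = (M \setminus M') \cup (M' \setminus M)$ between the ground truth manifold $M$ and the trained manifold $M'$ for a given positive class. First I would observe that by the paper's own definition, an adversarial example for a positively classified sample $x$ is a nearby $y = x + p$ with $A(y) \neq A(x)$ and $p$ visually imperceptible; operationally this requires that the segment $[x,y]$ crosses the boundary $\partial M'$ of the \emph{trained} manifold. The key reduction is that this crossing is genuinely ``adversarial'' (in the sense of being semantically misleading) only when the two sides of $\partial M'$ that are crossed do not correspond to two different sides of $\partial M$, i.e.\ when the crossing lies in a region where $M'$ disagrees with $M$.

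For the second, stronger clause ($M = M'$ implies no adversarial examples), I would argue directly: if $M = M'$ then $\partial M' = \partial M$, so any perturbation $p$ that makes $A$ flip its decision must by construction move $x$ from inside the ground truth manifold of its class into the complement, i.e.\ into an image which, by definition of $M$, \emph{does not} belong to class $c_x$. Such a $y$ is a legitimate semantic boundary crossing, not a spurious misclassification; it is precisely the ``borderline'' situation the author already acknowledges in the discussion around Figure~\ref{fig_two_manifolds}, and it is excluded from the informal notion of adversarial example that the paper adopts. Hence no genuine adversarial examples remain.

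For the first clause (close approximation implies rare adversarial examples), the plan is quantitative: let $V = {\rm Vol}(M \triangle M')/{\rm Vol}(M)$ be the relative approximation error. An adversarial $x$ must satisfy either (i) $x \in M \cap M'$ but the nearest decision-flip $y$ lies in $M \setminus M'$ at distance smaller than the visual threshold, or (ii) $x$ already lies in $M' \setminus M$. In either case, $x$ is within distance $\mathrm{diam}(M \triangle M')$ of the set $M \triangle M'$; taking probabilities uniformly over $M'$ and applying the same shell-volume style argument used in Lemma~\ref{lemma_sphere} and Theorem~\ref{theorem_adv_negatives}, the measure of adversarial $x$ is bounded by a monotone function of $V$ that vanishes as $V \to 0$. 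Combining this with Theorem~\ref{theorem_adv_neg_2} on the ground truth side shows that the only surviving adversarial examples are those pushed arbitrarily close to the common boundary $\partial M$, which again are genuine borderline images rather than adversarial ones.

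The main obstacle I expect is purely definitional rather than technical: to state ``rarely'' rigorously one must endow the ambient space with a probability measure (ideally the data-generating distribution restricted to $M'$) and fix a perturbation budget, neither of which the proposition specifies. A secondary obstacle is that the complexity of $\partial M$ established in the preceding subsection means ${\rm Vol}(M \triangle M')$ can be small while $\partial M'$ still differs from $\partial M$ on a set of large $(n-1)$-dimensional Hausdorff measure; carving surfaces, not volumes, is what actually matters, which is exactly why the proposition is phrased as ``rarely'' and not ``never'' in the approximate case and why it is called self-evident once the definitions are accepted.
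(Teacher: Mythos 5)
Your proof is correct in spirit but takes a genuinely different and considerably more elaborate route than the paper's. The paper labels the proposition ``self-evident'' and gives only a one-sentence justification immediately afterwards: an adversarial example indicates a failure to generalize, so reducing the generalization error to zero eliminates them by definition. That is the whole argument. You instead reduce everything to the symmetric difference $M \triangle M'$, separate the two clauses, and import the shell-volume machinery of Lemma~\ref{lemma_sphere} and Theorem~\ref{theorem_adv_negatives} to give ``rarely'' a quantitative meaning. What your route buys is an honest exposure of two gaps that the one-line argument glosses over. First, under the paper's own earlier formal definition, borderline images still possess adversarial examples even when $M = M'$ --- the paper concedes exactly this in the discussion around Figure~\ref{fig_two_manifolds} --- so the second clause only holds once ``adversarial example'' is implicitly narrowed to exclude genuine semantic boundary crossings; you perform that narrowing explicitly, which the paper never does. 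Second, as you correctly flag, a small ${\rm Vol}(M \triangle M')$ does not force $\partial M'$ to lie near $\partial M$ in surface measure: the symmetric difference can be thin yet ubiquitous, so an $\epsilon$-tube around it need not have vanishing volume, and the ``rarely'' clause therefore does not follow from volume closeness alone. You are right to identify this as the real obstacle rather than sweeping it into ``self-evident.'' In short, the paper's route is shorter because it silently re-reads the definition of adversarial example as ``generalization failure''; your route performs that re-reading in the open and pays with measure-theoretic bookkeeping that the paper never attempts.
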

An adversarial example indicates failure of network to generalize and therefore reducing generalization error to 0 eliminates them by definition. Why are we not able to do accurately approximate surfaces with deep networks despite their great success? Deep networks do perform piecewise linear approximation of functions much better than shallow networks due to their depth, see~\cite{bengio_linear}, but this can be improved further. Theoretically, one can not exactly carve out nowhere differentiable manifolds by finitely many neurons. Even if were to consider very close approximation, the surfaces of the manifolds are still too complex and piecewise linear approximation by neural networks with ReLU activation of sizes which are practically feasible at present is not sufficient as evidenced by adversarial examples. Choice of ReLU as activation function is not important in this observation and the problem will persist irrespective of the choice.

In deep networks, we have following difficulties if we want to overcome the problem of adversarial examples.
\begin{enumerate}
\item
First problem we face is that of explosion of size of training data in order to include all possible poses and variations of objects. Very large training datasets will be needed by the present day deep learning.
\item
Even if we can overcome the practical difficulty of getting training data covering all possible poses and variations, to carve out an accurate manifold, we need very large deep networks as the convolutional and max pooling layers provide only limited translation invariance. This is an inherent inefficiency in deep networks in approximating ground truth manifolds~\cite{capsule_networks}.
\item
We employ discriminative approaches rather than generative approaches for classification. Therefore, the goal is to separate out classes based on the training data rather than understanding their poses. There is no concept of poses and other generative parameters.
\item
Though deep learning makes use of hierarchical nature of natural images and learns features from low level to high level, there is no explicit way of implementing parts-whole hierarchy.
\end{enumerate}

It makes us suspect that adversarial examples may be result of above shortcomings of present day deep networks.

\subsection{Parts-Whole Manifold Learning Hypothesis}
How can we potentially do better? In Section~\ref{section_statistics_natural}, we discussed the Manifold Learning Hypothesis. We generalize the hypothesis, inspired by deep learning and by recently proposed Capsule Networks~\cite{capsule_networks}, to include another important feature of natural images, which is their hierarchical nature, in which a scene is made of objects, which are made of parts, and which are made of sub-parts, and so forth. Furthermore, different types of objects can share similar parts. This hierarchical nature of visual scenes manifests itself in the manifold structure. We need a better way to implement pose invariance and for that we need to be able to learn geometric constraints between sub-parts, parts and objects in a data-driven manner. Not only the image manifolds are embedded in low-dimensional subspaces, there is an inherent structure in them which must be algorithmically utilized to make neurons more powerful than neurons in conventional deep networks. This enables us to deal with complex manifolds. Image manifolds are complex but this complexity can be managed effectively through the use of hierarchical inter-relationships between manifolds. We call this {\em Parts-Whole Manifold Learning Hypothesis}.

With capsule networks and an iterative voting algorithm, we can achieve greater pose invariance by exploiting parts-whole relationships in a more robust manner, see~\cite{capsule_networks}, which is a promising step in improving deep learning further. The discussion in this section is inspired by this very recent work by Sabour et al.

Approach like in capsule networks should potentially allow us to work in less number of dimensions. Small parts can be detected using deep networks of low dimensionality. Besides detection, these neurons are trained to perform prediction of pose parameters. These parts can be put together in a whole object more efficiently which incorporates voting based geometric verification of pose of the whole object by its parts. This allows us to work in dimensions which is closer to the theoretical subspace dimensionality as per Manifold Learning Hypothesis. See Figure~\ref{fig_dim_reduction} to get intuition behind how this approach allows us to carve out complex manifolds.

\begin{figure}[t]
    \centering
		\includegraphics[width=\textwidth]{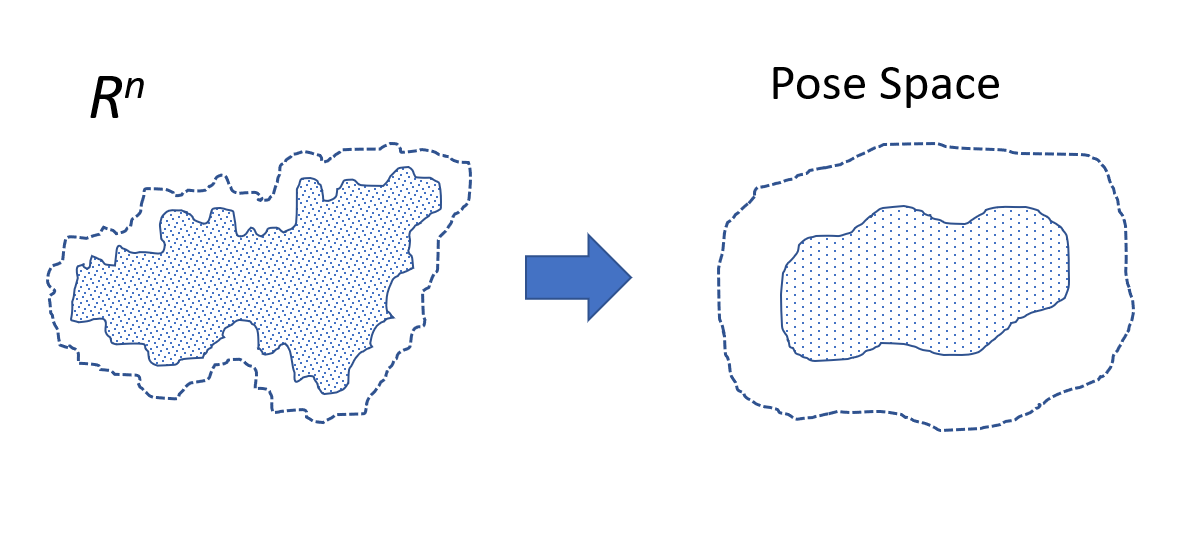}
    \caption{Carving out a complex manifold by keeping dimensionality in check. Manifold contains all images for which ground truth probability is 1. A halo is also shown for borderline images for which ground truth probability is subjectively less than 1 and above a threshold. Neural network learns how to map complex image manifolds into pose space in lower dimensions and will not suffer from adversarial examples.}
    \label{fig_dim_reduction}
\end{figure}

Consider the example of an image of a cat which has a background of trees, sky, clouds, a house and other objects. Where does it exist as a point in cat image manifold and how can we find that point? Consider a generative model as in computer graphics which maps model parameters of each object to rendering of the object in the scene. As these parameters change continuously, you traverse on image manifold of that object, see Figure~\ref{fig_pose_space}. 

For a cat, consider its parts such as eyes, ears, and mouth. Each part is an image manifold of small dimensionality. Detection of each part and its pose parameters identifies a particular point in its manifold. Next higher layer of neural network is trained to map this point to a particular pose of the cat, which is a point on the cat manifold. When points on parts manifolds vote for the same neighborhood in the cat manifold, we can fuse all those evidences to have a combined evidence on the presence and pose of a cat. It can be implemented as an iterative algorithm in which the probability of detecting a cat is refined as one retains only those parts which vote consistently towards one pose and prunes away the spurious and inconsistent ones.

\begin{figure}[b]
    \centering
		\includegraphics[width=\textwidth]{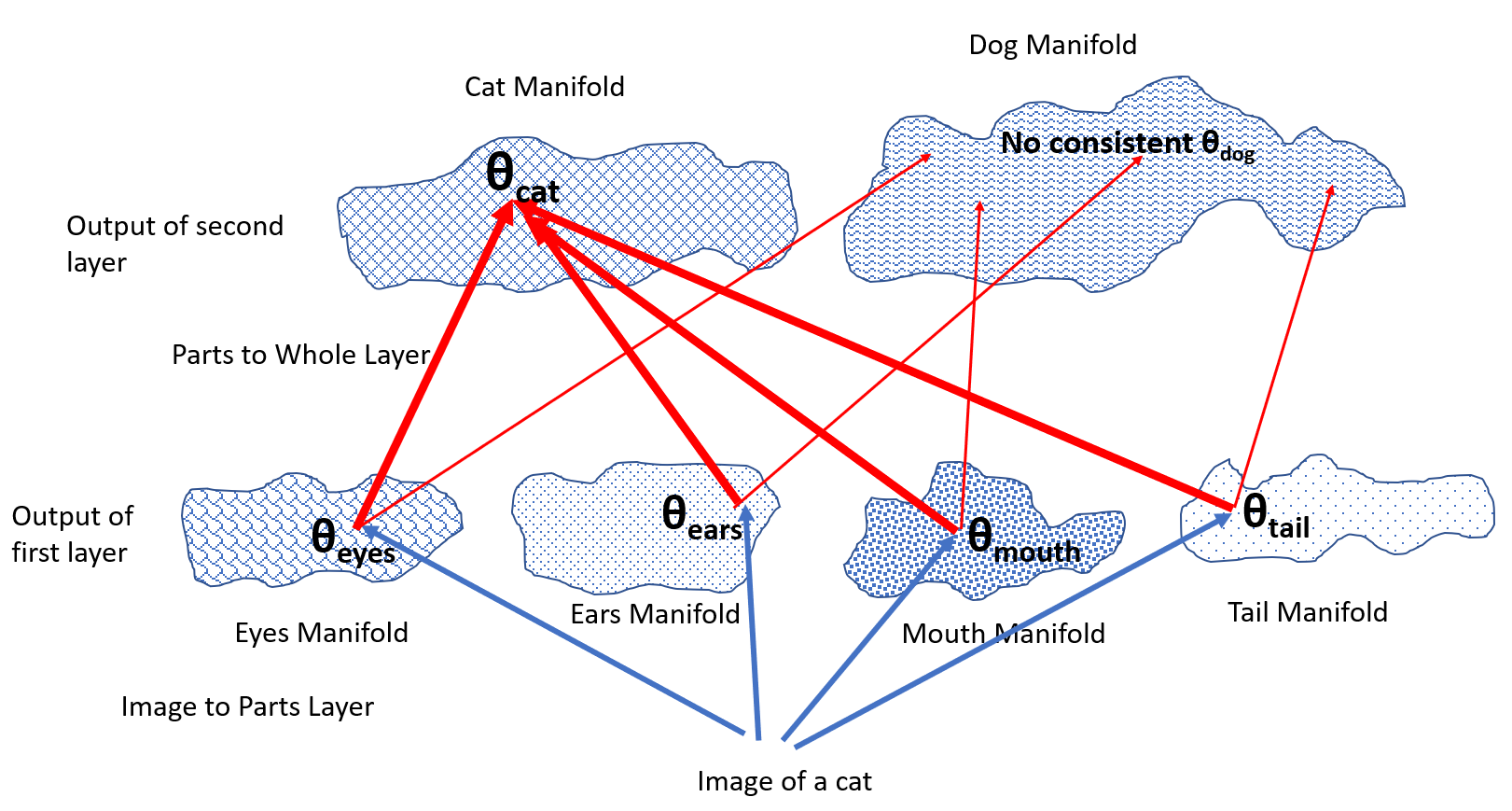}
    \caption{Managing the complexity of image manifolds by making use of Parts-Whole Manifold Learning Hypothesis.}
    \label{fig_future}
\end{figure}

See Figure~\ref{fig_future} for illustration of this approach which can be considered as a conceptual manifold formalization of ideas behind capsule networks~\cite{capsule_networks}. The parameters for parts are mapped to the parameters of the cat by the neural network,
\[\theta_{\rm cat} = f(\theta_{\rm eyes}, \theta_{\rm ears}, \theta_{\rm mouth}, \theta_{\rm tail},\ldots)\]
All the parts vote for a consistent pose of the cat and therefore there is redundancy between them. If there is significant redundancy and mutual information in terms of entropy, we will have
\[|\theta_{\rm cat}| \ll |\theta_{\rm eyes}| + |\theta_{\rm ears}| + |\theta_{\rm mouth}| + |\theta_{\rm tail}| + \ldots\]
This keeps the dimensionality of the whole objects in check though it will increase gradually as their complexity increases. This efficient approach in keeping dimensionality low in every layer of the neural network can be considered as very effective compression of the scene and as combining generative approach with discriminative approach. The generative model ensures that pose parameters are semantically meaningful. Low dimensionality assures us that we are getting close to dimensionality of underlying image manifolds.

Even separate objects which have consistent poses statistically can all work together in agreeing on correct scene interpretation. For example, in a traffic scene for self-driving cars, different objects can consistently vote for the pose of the road.

Objects can be totally unrelated with no mutual information and in that case dimensions will just add. To have both house and cat, the embedded subspace dimensionality will be
\[ |\theta_{\rm scene}| = |\theta_{\rm cat}| + |\theta_{\rm house}| \]
which renders the joint scene of cat with background of house. Note that parameters will also include extra scene parameters, such as viewpoint and lighting, to generatively create a 2-D image from the 3-D world, which may be same for cat and house. So even for seemingly unrelated objects there may be some redundancy as they are part of the same scene.

For future improvements, we should be able to infer these generative parameters and mappings of points from one manifold to another manifold for complex scenes. This can be done using both explicit and implicit approaches.
\begin{enumerate}
\item
We train neurons for hierarchy of parts in which the training data has explicit ground truth for poses of parts and objects. Neurons have regression loss function for these pose parameters, along with discriminative loss and generative loss.
\item
Ground truth is simpler without detailed annotations of parts and relationships. We estimate potential number of parts and pose parameters and design the architecture of the network accordingly. We let the network learn these implicitly through a combination of discriminative loss and generative loss as is done in capsule networks, see~\cite{capsule_networks}.
\end{enumerate}

Such networks will have following strengths.
\begin{enumerate}
\item
The need for getting training data for all poses and variations gets restricted to smallest parts which have low dimensionality and therefore it is practically feasible.
\item
Initial neuron layer needs to predict pose parameters of only fundamental parts from regions of raw images and therefore the number of parameters remain in check once again due to low dimensionality. For higher layers, neurons have to map points from one manifold to another and with ingenious work in future hopefully it will be practically feasible as we keep dimensionality in check at each layer.
\item
The neural network has generative component in form of pose parameters defined either explicitly or implicitly.
\item
The neural network is better implementation of parts-whole manifold learning hypothesis. Therefore, it learns semantically meaningful high-level abstractions rather than superficial and coarse geometries of manifolds.
\end{enumerate}

Successful outcome of the above work should eliminate adversarial examples. If human vision does not suffer from adversarial examples, computer vision should not either.

\section{Conclusion}
In AI community, there has been debate about deep learning. Need for greater rigor and understanding of deep learning has been emphasized. In this paper, we have presented results in the direction of building this rigor and understanding. We have shown how nature of high dimensional spaces explains the working of deep neural networks. We have pointed out fallacy in an argument in a paper published in prior literature which explained adversarial examples. We rigorously explain adversarial examples using properties of image manifolds in high dimensional spaces. We have presented several novel mathematical results explaining adversarial examples, local minima, optimization landscape, image manifolds and properties of natural images. Our mathematical results show that we have to worry about the surfaces of image manifolds as almost everything is close to surface in high dimensions and the root solution of adversarial examples lies in handling the complexity of these surfaces. Exact characterization of surfaces of image manifolds is a topic of further research, and it seems that local dimension goes through a continuum of values including fractional values indicating roughness and space filling properties of surfaces. We also discussed how deep learning can make progress in future. Though high dimensions pose a challenge, we can solve these challenges using novel ways of exploiting characteristics of natural images which will eliminate adversarial examples and overcome the current shortcomings of deep neural networks.


\begin{thebibliography}{1}
\bibitem{adversarial_survey}
Naveed Akhtar and Ajmal Mian, {\em Threat of adversarial attacks on deep learning in computer vision: a survey}, https://arxiv.org/abs/1801.00553v1 (2018).

\bibitem{lid_paper_amsaleg}
Laurent Amsaleg, James Bailey, Dominique Barbe, Sarah Erfani, Michael E. Houle, Vinh Nguyen,
and Milos Radovanovic, {\em The vulnerability of learning to adversarial perturbation increases with ?
intrinsic dimensionality}, Proc. of IEEE WIFS (2017).

\bibitem{bray_dean} A. J. Bray and D. S. Dean, {\em Statistics of critical points of gaussian fields on large-dimensional
spaces}, Physics Review Letter, 98, 150201 (2007).

\bibitem{loss_surface_choromanska} Anna Choromanska, Mikael Henaff, Michael Mathieu, Gérard Ben Arous and Yann LeCun,
{\em The loss surfaces of multilayer networks}, Proc. of 18th Intnl. Conf. on AI and Statistics, San Diego (2015), JMLR: W \& CP, Volume 18, https://arxiv.org/abs/1412.0233 (2014).

\bibitem{dean_majumdar} David S. Dean and Satya N. Majumdar, {\em Large deviations of extreme eigenvalues of random matrices}, Phys.Rev.Lett. 97, 160201, https://doi.org/10.1103/PhysRevLett.97.160201 (2006).

\bibitem{dedieu_malajovich} Jean-Pierre Dedieu and Gregorio Malajovich, {\em On the number of minima of a random polynomial},
 Journal of Complexity, Volume 24, Issue 2 (April), Pages 89-108, https://doi.org/10.1016/j.jco.2007.09.003 (2008).

\bibitem{yann_dauphin} Yann N. Dauphin, Razvan Pascanu, Caglar Gulcehre, Kyunghyun Cho, Surya Ganguli and Yoshua Bengio, {\em Identifying and attacking the saddle point problem in high-dimensional non-convex optimization}, https://arxiv.org/pdf/1406.2572.pdf (2014).

\bibitem{federer_isoperimetric_inequality} Herbert Federer, {\em Geometric measure theory}, Springer-Verlag (1969).

\bibitem{manifold_ams_paper} Charles Fefferman, Sanjoy Mitter and Hariharan Narayanan, {\em Testing the manifold hypothesis},
Journal of the American Mathematical Society, Volume 29, Number 4, October, Pages 983-1049,
http://dx.doi.org/10.1090/jams/852 (2016).

\bibitem{explaining_goodfellow}
Ian J. Goodfellow, Jonathon Shlens and Christian Szegedy, {\em Explaining and harnessing adversarial examples}, Proc. of ICLR 2015, https://arxiv.org/pdf/1412.6572.pdf (2015).

\bibitem{hein_andriushchenko}
Matthias Hein and Maksym Andriushchenko, {\em Formal Guarantees on the Robustness of a Classifier against Adversarial Manipulation}, arxiv preprint arXiv:1705.08475 (2017).

\bibitem{jo_bengio}
Jason Jo and Yoshua Bengio, {\em Measuring the tendency of
CNNs to learn surface statistical regularities}, arXiv preprint arXiv:1711.11561 (2017).

\bibitem{kurakin_physical_world}
Alexey Kurakin, Ian J. Goodfellow and Samy Bengio, {\em Adversarial examples in the physical world}, Proc. of ICLR 2017, https://arxiv.org/pdf/1607.02533.pdf (2017).

\bibitem{poggio_2}
Qianli Liao and Tomaso Poggio,
{\em Theory II: Landscape of the empirical risk in deep learning}, https://arxiv.org/abs/1703.09833v2 (2017).

\bibitem{lid_paper_deep}
Xingjun Ma, Bo Li, Yisen Wang, Sarah M. Erfani, Sudanthi Wijewickrema, Michael E. Houle, Grant Schoenebeck, Dawn Song, and James Bailey,
{\em Characterizing adversarial subspaces using local intrinsic dimensionality}, https://arxiv.org/abs/1801.02613v1 (2018).

\bibitem{manifold_book} {\em Manifold learning theory and applications}, CRC Press, Boca Raton, FL. Edited by
Yunqian Ma and Yun Fu (2018).

\bibitem{bengio_linear}
Guido Montúfar, Razvan Pascanu, Kyunghyun Cho and Yoshua Bengio, {\em On the number of linear regions of deep neural networks}, Proc. of NIPS 2014, Pages 2924-2932 (2014).

\bibitem{fool_nguyen}
A. Nguyen, J.Yosinski and J. Clune, {\em Deep neural networks are easily fooled: High confidence predictions
for unrecognizable images}, Proc. of CVPR 2015, https://arxiv.org/pdf/1412.1897.pdf (2015).

\bibitem{loss_surface_nguyen} Quynh Nguyen and Matthias Hein, {\em The loss surface of deep and wide neural networks},
	https://arxiv.org/abs/1704.08045 (2017).
	
\bibitem{poggio_3}
Tomaso Poggio, Kenji Kawaguchi, Qianli Liao, Brando Miranda, Lorenzo Rosasco, Xavier Boix, Jack Hidary and Hrushikesh Mhaskar,
{\em Theory of deep learning III: explaining the non-overfitting puzzle}, https://arxiv.org/abs/1801.00173v1 (2017).
	
\bibitem{capsule_networks} Sara Sabour, Nicholas Frosst and Geoffrey E Hinton, {\em Dynamic routing between capsules}, https://arxiv.org/abs/1710.09829 (2017).

	\bibitem{levent_sagun} Levent Sagun, Leon Bottou and Yann LeCun, {\em Eigenvalues and the Hessian in deep learning: Singularity and beyond}, https://openreview.net/pdf?id=B186cP9gx (2017).

\bibitem{1f_images} A. van der Schaafa and J.H. van Haterena, {\em Modelling the power spectra of natural images: Statistics and information}, Vision Research, Volume 36, Issue 17 (September), Pages 2759-2770 (1996). 
	
\bibitem{srivastava_manifold}
A. Srivastava, A. Lee, E. Simoncelli and S.-C. Zhu, {\em On advances in statistical modeling of natural images}, Journal of Mathematical Imaging and Vision 18: 17. https://doi.org/10.1023/A:1021889010444 (2003).

\bibitem{stone}
M. Stone, {\em The generalized Weierstrass approximation theorem}, Mathematics Magazine 21 (21), 167-184 and 21 (5), 237-254 (1948).

\bibitem{wavelets_strang}
Gilbert Strang and Truong Nguyen, {\em Wavelets and filter banks}, Wellesley-Cambridge Press, 2nd edition (1996).

\bibitem{intriguing_szegedy}
Christian Szegedy, Wojciech Zaremba, Ilya Sutskever, Joan Bruna, Dumitru Erhan, Ian J. Goodfellow
and Rob Fergus, {\em Intriguing properties of neural networks}, Proc. of ICLR 2014, http:
//arxiv.org/abs/1312.6199 (2014).

\bibitem{tanay_griffin}
Thomas Tanay and Lewis Griffin, {\em A boundary tilting persepective on the phenomenon of adversarial examples}, https://arxiv.org/abs/1608.07690 (2016).

\bibitem{wakin_manifold}
M. B. Wakin, D. L. Donoho, H. Choi, and R. G. Baraniuk, {\em The multiscale structure of
non-differentiable image manifolds}, In SPIE Optics and Photonics, volume 5914, pages
413 - 429 (2005).

\end{thebibliography}
\end{document}